\definecolor{astral}        {RGB}{46,116,181}
\definecolor{cb-blue}       {RGB}{70, 130, 180}
\definecolor{orange}        {RGB}{214,150, 92}
\definecolor{green} {RGB}{136,196,136}
\theoremstyle{plain} 
\newtheorem{theorem}{Theorem}[section]
\newtheorem{proposition}[theorem]{Proposition}
\newtheorem{lemma}[theorem]{Lemma}
\Crefname{assumption_app}{\textbf{A}\hspace{-0.12cm}}{\textbf{H}\hspace{-3pt}}
\crefname{assumption_app}{\textbf{A}}{\textbf{A}}
\theoremstyle{definition} 
\newtheorem{definition}[theorem]{Definition}
\newtheorem{remark}[theorem]{Remark}
\theoremstyle{remark} 
\crefname{theorem}{theorem}{theorems}
\Crefname{theorem}{Theorem}{Theorems}
\crefname{proposition}{proposition}{propositions}
\Crefname{proposition}{Proposition}{Propositions}
\crefname{corollary}{corollary}{corollaries}
\Crefname{corollary}{Corollary}{Corollaries}
\crefname{lemma}{lemma}{lemmas}
\Crefname{lemma}{Lemma}{Lemmas}
\crefname{definition}{definition}{definitions}
\Crefname{definition}{Definition}{Definitions}
\crefname{example}{example}{examples}
\Crefname{example}{Example}{Examples}
\crefname{assumption}{assumption}{assumptions}
\Crefname{assumption}{Assumption}{Assumptions}
\newacronym{DM}{{{\textsc{\small DM}}}}{Diffusion Model}
\newacronym{GAN}{{{\textsc{\small GAN}}}}{Generative Adversarial Network}
\newacronym{VAE}{{{\textsc{\small VAE}}}}{Variational Autoencoder}
\newacronym{DiffusionVAE}{{{\textsc{\small DiffusionVAE}}}}{ VAE with a Diffusion Prior}
\newacronym{VI}{{{\textsc{\small VI}}}}{Variational Inference}
\newacronym{VQVAE}{{{\textsc{\small VQ-VAE}}}}{Vector-Quantized Variational Autoencoder}
\newacronym{LDM}{{{\textsc{\small LDM}}}}{Latent Diffusion Model}
\newacronym{EBM}{{{\textsc{\small EBM}}}}{Energy-based Model}
\newacronym{IPLD}{{{\textsc{\small IPLD}}}}{Interacting Particle Latent Diffusion}
\newacronym{DDPM}{{{\textsc{\small DDPM}}}}{Denoising Diffusion Probabilistic Model}
\newacronym{LSGM}{{{\textsc{\small LSGM}}}}{Latent Score-based Generative Models}
\newacronym{SVGD}{{{\textsc{\small SVGD}}}}{Stein Variational Gradient Descent}
\newacronym{MMD}{{{\textsc{\small MMD}}}}{Maximum Mean Discrepancy}
\newacronym{ELBO}{{{\textsc{\small ELBO}}}}{Evidence Lower BOund}
\newacronym{LDDM}{{{\textsc{\small LDDM}}}}{Latent Denoising Diffusion Model}
\newacronym{LVM}{{{\textsc{\small LVM}}}}{Latent Variable Model}
\newacronym{PGD}{{{\textsc{\small PGD}}}}{Particle Gradient Descent}
\newacronym{MPGD}{{{\textsc{\small MPGD}}}}{Momentum Particle Gradient Descent}
\newacronym{EM}{{{\textsc{\small EM}}}}{Expectation-Maximization}
\newacronym{FID}{{{\textsc{\small FID}}}}{Fréchet Inception Distance}
\newacronym{GMM}{{\textsc{\small GMM}}}{Gaussian Mixture Model}
\newacronym{xlsi}{$\text{xLSI}$}{Extended Log-Sobolev Inequality}
\newacronym{xt2i}{$\text{xT}_2\text{I}$}{Extended Talagrand-type Inequality}
\newcommand{\HEADER}[1]{\ALC@it\underline{\textsc{#1}}\begin{ALC@g}}
\newcommand{\ENDHEADER}{\end{ALC@g}}
\newcolumntype{L}[1]{>{\RaggedRight\arraybackslash}p{#1}}
\newcolumntype{C}[1]{>{\Centering\arraybackslash}p{#1}}
\def\dd{{\mathrm{d}}}
\def\sX{{\mathsf{X}}}
\def\sZ{{\mathsf{Z}}}
\def\1{\bm{1}}
\def\vs{{\bm{s}}}
\DeclareMathAlphabet{\mathsfit}{\encodingdefault}{\sfdefault}{m}{sl}
\SetMathAlphabet{\mathsfit}{bold}{\encodingdefault}{\sfdefault}{bx}{n}
\newcommand{\R}{\mathbb{R}}
\newcommand{\KL}{D_{\mathrm{KL}}}
\begin{document}

\twocolumn[

\aistatstitle{Training Latent Diffusion Models with Interacting Particle Algorithms}

\aistatsauthor{ Tim Y. J. Wang$^{\dagger}$ \And Juan Kuntz \And O. Deniz Akyildiz$^{\dagger}$ }
\runningauthor{Tim Y. J. Wang, Juan Kuntz, O. Deniz Akyildiz}

\aistatsaddress{ $^{\dagger}$Department of Mathematics, Imperial College London }
]

\makeatletter
\fancyhead[CE]{\small\bfseries\@runningtitle}
\fancyhead[CO]{\small\bfseries\@runningauthor}
\makeatother

\begin{abstract}
  We introduce a novel particle-based algorithm for end-to-end training of latent diffusion models. We reformulate the training task as minimizing a free energy functional and obtain a gradient flow that does so. By approximating the latter with a system of interacting particles, we obtain the algorithm, which we underpin theoretically by providing error guarantees. The novel algorithm compares favorably in experiments with previous particle-based methods and variational inference analogues.
\end{abstract}

\section{INTRODUCTION}
\glspl*{DM} introduced in~\cite{pmlr-v37-sohl-dickstein15}, and further developed in~\cite{hodenoisingdiffusionprobabilistic2020, songscorebasedgenerativemodeling2021}, excel at numerous generative modeling tasks. Examples include image synthesis \citep{dhariwal2021beatsGAN}, protein design \citep{watsonNovoDesignProtein2023}, and language modeling \citep{nieLargeLanguageDiffusion2025}. They work by progressively adding noise to data to transform the data distribution into an easy-to-sample reference distribution, and then learn to revert this noising process.

However, these steps take place in the data's \emph{ambient} space, which is typically high dimensional. For this reason, \gls*{DM} training and inference prove computationally expensive. To alleviate this issue, \cite{rombachhighresolutionimagesynthesis2022,vahdatscorebasedgenerativemodeling2021,wehenkeldiffusionpriorsvariational2021} and others proposed using a \gls*{VAE} \citep{Kingma2013AutoEncodingVB} to map back-and-forth between the high-dimensional ambient space and a low-dimensional \emph{latent} space, and carrying out the noising/de-noising steps in the latter. To date, the world's most popular \glspl*{DM} (e.g., Stable Diffusion~\citep{podell2024sdxl}) fall into this \gls*{LDM} category.

Recently,~\cite{kuntz23a,limMomentumParticleMaximum2024,limParticleSemiimplicitVariational2024} have reported performance gains for parameter estimation in simple latent variable models and generator networks by replacing variational approximations with particle-based ones. Here, we investigate whether this is also the case for \glspl*{LDM} and introduce, to the best of our knowledge, the first particle-based method for \gls*{LDM} training.

\paragraph{Contributions.} 
\begin{enumerate}[label={(C\arabic*)}]
\item We recast the problem of \gls*{LDM} training as the minimization of a free energy functional, characterize the functional's minima (Proposition~\ref{prop:ftildemins}), identify a gradient flow that minimizes it, and establish its exponential convergence under standard assumptions (Theorem~\ref{thm:exp_convg_informal}).
\item Approximating the flow in (C1), we obtain \glspl*{IPLD}---a simple, particle-based, and encoder-free algorithm for \gls*{LDM} training well-adapted to modern compute environments---and we derive non-asymptotic bounds on its error (Theorem~\ref{thrm:error-bound}).
\item Through several practical improvements, we obtain an efficient and scalable version of the algorithm (Section~\ref{sec:pratical-alg}) and demonstrate its effectiveness in numerical experiments (Section~\ref{sec:numerics}).
\item Lastly, by approaching \glspl*{LDM} from the above unexplored angle, we open the door to other novel \gls*{LDM} training methods. In particular, our approach connects latent diffusion models to the rich body of work on gradient flows and interacting particle systems stemming from the optimal transport literature \citep{villani2008optimal,chaintron_2022}--a connection that can spur the design of new algorithms as we demonstrate here.
\end{enumerate}
\paragraph{Paper structure.} The paper is organized as follows. First, we review the necessary background on \glspl*{LDM} and identify relevant loss functions (Section~\ref{sec:dldm}). Next, we obtain \gls*{IPLD}: an algorithm for minimizing this loss (Section~\ref{sec:gradient_flow_ldm}). We do so by identifying a gradient flow that minimizes the loss (Section~\ref{sec:gradient_flow}), approximating it (Section~\ref{sec:practical_disc}), and incorporating a series of practical improvements (Section~\ref{sec:pratical-alg}). We then survey the related literature (Section~\ref{sec:rel_work}) and experimentally compare \gls*{IPLD} with relevant baselines (Section~\ref{sec:numerics}). We conclude with a discussion of our results, \gls*{IPLD}'s limitations, and future research directions (Section~\ref{sec:conclusion}).

\paragraph{Notation.} We use $\mathsf{X} = \mathbb{R}^{d_x}$ and $\mathsf{Z} = \mathbb{R}^{d_z}$ to denote the ambient and latent spaces, $\Theta = \mathbb{R}^{d_\theta}$ and $\Phi = \mathbb{R}^{d_\phi}$ the decoder's and \gls*{DM}'s parameter spaces (c.f.~Section~\ref{sec:dldm}), $\{x^1,\dots,x^M\}$ the training set, $[M]:=\{1,\dots,M\}$ the set of indices, and $\mathcal{P}_2(\R^d)$ the space of probability distributions on $\R^d$ with finite second moment. To denote product measures, we write $q^{1:M} = (q^1, \dots, q^M) \in \mathcal{P}_2(\sZ)^M$ for $M$-tuples of distributions $q^1, \dots, q^M$ over $\sZ$. 

\section{PRELIMINARIES}\label{sec:dldm}

We consider latent-space versions of \glspl*{DDPM}~\citep{hodenoisingdiffusionprobabilistic2020} similar to those in \cite{rombachhighresolutionimagesynthesis2022, vahdatscorebasedgenerativemodeling2021,wehenkeldiffusionpriorsvariational2021}. To be specific, for a fixed data point $x \in \sX$, we consider the following latent variable model:
\begin{equation}\label{eq:LDDM}
    p_{\theta, \phi}(x, z_{0:K}) = p_{\phi}(x|z_{0}) p_{\theta}(z_{0:K})
\end{equation}
where ${p_{\phi}(x|z_{0}) = \mathcal{N}(x; g_\phi(z_0),\sigma^2 I)}$ is an isotropic Gaussian decoder, with $g_\phi: \sZ \to \sX$ 
denoting a neural network parameterized by $\phi$, that  maps from the latent space to the ambient space, and the prior
$$p_{\theta}(z_{0:K}) := p(z_K) \prod_{k=1}^K p_{\theta, k}(z_{k-1}|z_k),$$
is a \gls*{DDPM} parameterized by $\theta$ \citep{hodenoisingdiffusionprobabilistic2020}. The \gls*{DDPM} end point is defined with a standard Gaussian distribution $p_K (z_{K}):=\mathcal{N}(z_K; 0, I)$ at time $K$, and its backward kernels are also Gaussian: $$p_{\theta, k}(z_{k-1}|z_{k}):=\mathcal{N}(z_{k-1}; \mu_{\theta, k}(z_k), \beta_k^2 I),$$ where $(k,z)\mapsto\mu_{\theta, k}(z)$ denotes a neural network parameterized by $\theta$; and $\{\beta_k\}_{k=1}^K$ a fixed noise schedule.

Suppose we are given a dataset $\{x^1,\dots,x^M\}$ where $x^m \sim p_{\text{data}}$ for $m \in [M]$. To fit the generative model in \eqref{eq:LDDM}, we aim to find parameters $(\theta_\star,\phi_\star)$ that maximize the \emph{expected log-likelihood} $\mathbb{E}_{p_{\text{data}}}[\log p_{\theta,\phi}(X)]$. Given that we only have access to the empirical measure $p_{\text{data}}^M = (1/M) \sum_{m=1}^M \delta_{x^m}$, we approximate the expected log-likelihood with the empirical average:
\begin{equation}\label{eq:ell}
    \ell(\theta,\phi):=\frac{1}{M}\sum_{m=1}^M\log p_{\theta,\phi}(x^m),
\end{equation}
where $$p_{\theta,\phi}(x):=\int p_{\theta,\phi}(x,z_{0:K}) \dd z_{0:K}$$ denotes the probability density the model assigns to a given datapoint $x$. This quantity is also called the \emph{marginal likelihood}.
\subsection{Minimizing Free Energy}
The direct computation of $\ell(\theta, \phi)$ (and consequently of its gradients) is intractable, as $p_{\theta,\phi}(x)$ involves marginalising over the latent variables $z_{0:K}$. To circumvent this issue, we instead look at utilising lower bounds on $\ell(\theta, \phi)$. Using the standard lower bound of marginal likelihood, we obtain using Jensen's inequality that $\log p_{\theta,\phi}(x) \geq \mathbb{E}_{q}[\log p_{\theta,\phi}(x, \cdot)/q(\cdot)]$ for any $x$ and distribution $q$ over the latent space. We aim at generalising this bound for $M$ data points where $q$ is the prior distribution defined by the \gls*{DDPM} in the latent space.

To this end, consider the product measure $q^{1:M} \in \mathcal{P}_2(\sZ)^M$ and note that
\begin{align}\label{eq:lower_bound}
\ell(\theta, \phi) &\geq \frac{1}{M}\sum_{m=1}^M\mathbb{E}_{q^m}\left[ \log \frac{p_{\theta, \phi}(x^m, z_{0})}{q^m(z_{0})} \right]
\end{align}
where we have for all $(\theta, \phi)\in \Theta\times \Phi$: 
\begin{equation}\label{eq:joint_z1k}
    p_{\theta, \phi}(\cdot, z_{0}):=\int p_{\theta,\phi}(\cdot,z_{0:K})\dd z_{1:K}.
\end{equation}
The negative of the quantity in the r.h.s. of \eqref{eq:lower_bound} is termed \textit{the free energy}, denoted $F(\theta,\phi,q^{1:M})$
\citep{bishop2006pattern}:
\begin{align}\label{eq:fen78awfnye8awnfeua}
F(\theta,\phi,q^{1:M})&:= \frac{1}{M}\sum_{m=1}^M\mathbb{E}_{q^m}\left[ \log \frac{q^m(z_{0})}{p_{\theta, \phi}(x^m, z_{0})} \right].
\end{align}
Noting that $-\ell(\theta, \phi) \leq F(\theta, \phi, q^{1:M})$ for all $(\theta, \phi, q^{1:M}) \in \Theta \times \Phi \times \mathcal{P}_2(\sZ)^M$, we see that minimizing \eqref{eq:fen78awfnye8awnfeua} above over all parameters $\theta,\phi$ and $M$-tuples $q^{1:M}=(q^1,\dots,q^M)$ is equivalent to maximizing $\ell(\theta,\phi)$ over all $\theta,\phi$; see, e.g.,~\citet{Neal1998}. 

However, the joint density $p_{\theta, \phi}(x, z_{0})$ in \eqref{eq:fen78awfnye8awnfeua} is still computationally prohibitive to evaluate due to marginalizing over the entire diffusion trajectory $z_{1:K}$. We therefore resort to one more upper bound for the negative log-likelihood $-\ell(\theta, \phi)$, leading to the \textit{tilted free energy} (see Appendix~\ref{app:reparam_diffusion} for the full derivation):
\begin{align}
\tilde{F}(\theta, \phi, q^{1:M}) &:= \frac{1}{M}\sum_{m=1}^M\mathbb{E}\left[ \log \frac{q^m(z_{0:K})}{p_{\theta, \phi}(x^m, z_{0:K})} \right],\label{eq:free-energy}
\end{align}
where the expectation is taken with respect to $q^m(z_{0:K})$ with $q^m(z_{0:K}):=q(z_{1:K}|z_0)q^m(z_0)$ for all $m \in [M]$ and $$q(z_{1:K}|z_{0})=\prod_{k=1}^K q(z_k|z_{k-1})$$ is the \emph{forward process}, a product of Gaussian kernels as in \gls*{DDPM}.

Examining the second lower bound in \eqref{eq:free-energy}, we note that $\tilde{F}$ can be decomposed as (cf. Appendix~\ref{app:reparam_diffusion}):
\begin{align*}
    \tilde{F}(\theta, \phi, q^{1:M}) 
    &= F(\theta, \phi, q^{1:M}) + \frac{1}{M}\sum_{m=1}^M \mathbb{E}_{q^m}[\mathcal{R}(\theta, z_0)],
    \end{align*}
    where
\begin{align}
    \quad \mathcal{R}(\theta, z_0) &= \KL\big(q(z_{1:K}|z_{0}) \|p_{\theta}(z_{1:K}|z_{0})\big) \geq 0.
\end{align}
So $\tilde{F}$ amounts to a regularized version of $F$ that penalizes deviations from the forward process.
Therefore, we can alternatively view $\tilde{F}$ as the free energy obtained replacing  $p_{\theta,\phi}(x^m,z_0)$ in~\eqref{eq:fen78awfnye8awnfeua} with the \emph{tilted} version:
\begin{equation}\label{eq:ptildedef}
\tilde{p}_{\theta,\phi}(x^m,z_0):=p_{\theta,\phi}(x^m,z_0)\exp(-\mathcal{R}(\theta,z_0)),
\end{equation}
for $m \in [M]$. For this reason, similar arguments to those behind \cite[Theorem 2]{Neal1998} yield the following result.
\begin{proposition}\label{prop:ftildemins}$(\theta_\star,\phi_\star)$ maximize $\tilde{\ell}(\theta,\phi):=M^{-1}\sum_{m=1}^M\log \tilde{p}_{\theta,\phi}(x^m)$ iff $(\theta_\star,\phi_\star,q^{1:M}_\star)$ minimize $\tilde{F}$ for some $q_\star^{1:M}$, where $\tilde{p}_{\theta,\phi}(x^m) := \int \tilde{p}_{\theta,\phi}(x^m,z_{0:K}) \dd z_{0:K}$ for all $m \in [M]$.
\end{proposition}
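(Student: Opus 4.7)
The plan is to adapt the classical Neal--Hinton free-energy decomposition to the \emph{tilted} density $\tilde{p}_{\theta,\phi}$ defined in~\eqref{eq:ptildedef}. The key identity I want to establish is
\begin{equation*}
\tilde{F}(\theta,\phi,q^{1:M}) \;=\; -\tilde{\ell}(\theta,\phi) \;+\; \frac{1}{M}\sum_{m=1}^M \KL\!\left(q^m \,\|\, \tilde{p}_{\theta,\phi}(\cdot\mid x_m)\right),
\end{equation*}
where the tilted posterior $\tilde{p}_{\theta,\phi}(z_0\mid x_m) := \tilde{p}_{\theta,\phi}(x_m,z_0)/\tilde{p}_{\theta,\phi}(x_m)$ is a genuine probability density on $\sZ$ because its denominator is, by construction, the $z_0$-normalizer of its numerator.

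To derive the identity I would begin from the definition of $-\tilde{F}$ in~\eqref{eq:free-energy} and split the integrand as
\begin{equation*}
\log \frac{p_{\theta,\phi}(x_m,z_{0:K})}{q(z_{1:K}\mid z_0)\,q^m(z_0)} \;=\; \log \frac{p_{\theta,\phi}(x_m,z_0)}{q^m(z_0)} \;-\; \log \frac{q(z_{1:K}\mid z_0)}{p_\theta(z_{1:K}\mid z_0)}.
\end{equation*}
Taking the $q(z_{1:K}\mid z_0)$-expectation of the second piece produces exactly $\mathcal{R}(\theta,z_0)$, and invoking~\eqref{eq:ptildedef} collapses the inner expectation to $\log \tilde{p}_{\theta,\phi}(x_m,z_0) - \log q^m(z_0)$. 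Multiplying and dividing inside the logarithm by $\tilde{p}_{\theta,\phi}(x_m)$ then separates off a $\log\tilde{p}_{\theta,\phi}(x_m)$ term (constant in $z_0$) and a $-\KL$ term, yielding the displayed identity.

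With the identity in hand, both implications follow at once. Non-negativity of KL gives, for every fixed $(\theta,\phi)$, $\tilde{F}(\theta,\phi,q^{1:M}) \geq -\tilde{\ell}(\theta,\phi)$, with equality attained by $q_\star^m := \tilde{p}_{\theta,\phi}(\cdot\mid x_m)$. Hence, if $(\theta_\star,\phi_\star)$ maximize $\tilde{\ell}$, choosing $q_\star^m := \tilde{p}_{\theta_\star,\phi_\star}(\cdot\mid x_m)$ gives $\tilde{F}(\theta_\star,\phi_\star,q_\star) = -\tilde{\ell}(\theta_\star,\phi_\star) \leq -\tilde{\ell}(\theta,\phi) \leq \tilde{F}(\theta,\phi,q^{1:M})$ for every competitor. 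Conversely, any joint minimizer $(\theta_\star,\phi_\star,q_\star)$ of $\tilde{F}$ must saturate the KL bound (otherwise, swapping each $q_\star^m$ for the corresponding tilted posterior strictly decreases $\tilde{F}$), so $\tilde{F}(\theta_\star,\phi_\star,q_\star) = -\tilde{\ell}(\theta_\star,\phi_\star)$ and this value coincides with $\inf_{\theta,\phi}(-\tilde{\ell})$, identifying $(\theta_\star,\phi_\star)$ as a maximizer of $\tilde{\ell}$.

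The proof is essentially routine EM-style rearrangement, so there is no deep obstacle. The only point requiring mild care is the well-definedness of the tilted posterior, which requires $\tilde{p}_{\theta,\phi}(x_m) > 0$ for every $m$; since this is already needed for $\tilde{\ell}(\theta,\phi)$ to be finite, it amounts only to a standing nondegeneracy condition to flag. All remaining steps are Fubini-type exchanges of expectations, producing no analytic difficulty beyond what Neal--Hinton already handled.
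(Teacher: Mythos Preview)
Your proposal is correct and follows exactly the route the paper indicates: the paper observes that $\tilde{F}$ is the ordinary free energy associated with the tilted joint $\tilde{p}_{\theta,\phi}$ in~\eqref{eq:ptildedef} and then defers to the Neal--Hinton argument, which is precisely the decomposition $\tilde{F}=-\tilde{\ell}+\tfrac{1}{M}\sum_m\KL(q^m\|\tilde{p}_{\theta,\phi}(\cdot\mid x_m))$ you derive and exploit. Your added remark on the well-definedness of the tilted posterior is a fair caveat but, as you note, already implicit in $\tilde{\ell}$ being finite.
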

In an idealized setting where the backward process parameterized by $\theta$ is expressive enough to match the chosen forward process, the penalty $\mathcal{R}(\theta, z_0)$ vanishes, and our tilted log-likelihood $\tilde{\ell}(\theta,\phi)$ shares the same maxima and maximizers with the true $\ell(\theta,\phi)$ in \eqref{eq:ell}.
Hence, for optimal parameters $\theta_\star$ such that $$\theta_\star \in \Theta_0:=\{\theta\in \Theta: \mathcal{R}(\theta,\cdot)\equiv 0\},$$  Proposition~\ref{prop:ftildemins} implies that the minimizers $(\theta_\star,\phi_\star)$-components of $\tilde{F}(\theta, \phi, q^{1:M})$, our tractable proxy for optimization, also maximize the marginal log-likelihood $\ell$, our objective of interest. 
While this is never perfectly achieved in practice, we believe a close approximation is possible when the forward and reverse processes are chosen with care (generally, the more expressive the latter is, the better).

Previous works such as~\cite{wehenkeldiffusionpriorsvariational2021} restrict $q^m$ to mean-field Gaussian distributions of the sort $\mathcal{N}(z_0;\mu_\psi(x^m),\Sigma_\psi(x^m))$, where the mean and (diagonal) covariance matrix are parameterized by an encoder with parameters $\psi$, and they optimize $\tilde{F}$ over $(\theta,\phi,\psi)$. In the following, we take a different approach and replace these parametric variational approximations with particle-based ones.

\section{INTERACTING PARTICLE LATENT DIFFUSION}\label{sec:gradient_flow_ldm}
To obtain an algorithm for minimizing $\tilde{F}$ in \eqref{eq:free-energy}, we follow similar steps to those taken in \citet{kuntz23a} to obtain particle-based algorithms for minimizing $F$ in \eqref{eq:fen78awfnye8awnfeua} for simpler latent variable models (in particular, ones for which $p_{\theta,\phi}(x,z_0)$ have a standard Gaussian prior and do not have complex dependencies on latent variables $z_0$). 

\subsection{The gradient flow}
\label{sec:gradient_flow}
To derive our algorithm, we search for an analogue of gradient descent (GD) applicable to $\tilde{F}(\theta, \phi, q^{1:M})$. A single update step of GD in the Euclidean space $\R^d$ for minimizing a function $f:\R^d\to\R$ is given by $x_{k+1}=x_k-h\nabla_x f(x_k)$. This update step is exactly the Euler discretization with step size $h>0$ of the continuous-time \emph{gradient flow} $\dot{x}_t=-\nabla_xf(x_t)$.
The analogue of the continuous time gradient flow we now identify resides in the joint space of parameters $\Theta\times \Phi\ni (\theta, \phi)$ and distributions $\mathcal{P}_2(\sZ)^M\ni q^{1:M}$. Under this geometry,
$\nabla \tilde{F}=(\nabla_\theta \tilde{F}, \nabla_\phi \tilde{F},\nabla_{q^1} \tilde{F}, \dots,\nabla_{q^M} \tilde{F})$ (see Appendix~\ref{app:otto_calc} for details):
\begin{align}
    &\nabla_\theta \tilde{F}(\theta, \phi, q^{1:M}) = \frac{1}{M}\sum_{m=1}^M\mathbb{E}_{q^m}\left[\nabla_{\theta} \mathcal{L}_D(\theta, z_{0})\right],
    \label{eq:theta_ode} \\
    &\nabla_\phi \tilde{F}(\theta, \phi, q^{1:M}) 
    =-\frac{1}{M}\sum_{m=1}^M\mathbb{E}_{q^m}\left[\nabla_{\phi} \log p_{\phi}( x^m | z_0)\right],\label{eq:phi_ode} \\\nonumber
    &\nabla_{q^m} \tilde{F}(\theta, \phi, q^{1:M}) \\ 
    &= [\nabla_{z_0} \cdot \left[
        q^m(z_0) \nabla_{z_0}  \mathcal{L}^m(\theta, \phi, z_0) \right]- \Delta_{z_0} q^m(z_0)],\label{eq:fpe_qt} 
\end{align}
where the last equation holds for all $m\in [M]$, and we have defined:
\begin{equation}\label{eq:single_loss_drift}
    \mathcal{L}^m(\theta, \phi, z_0):=\log(p_{\phi}( x^m|z_0))-\mathcal{L}_D(\theta, z_0), 
\end{equation}
with $\mathcal{L}_D(\theta,z_0)$ being the \emph{diffusion loss} for the latent space \gls*{DDPM} prior\footnote{We point out that the loss is dependent on the number of diffusion time steps $K$.}:
\begin{equation}
\mathcal{L}_D(\theta,z_0):=\mathbb{E}_{q(z_{1:K}|z_{0})}\left[ \log \frac{{q(z_{1:K}|z_{0})}}{p_{\theta}(z_{0:K})} \right].
\end{equation}
The gradient flow then reads 
\begin{equation}\label{eq:flow}
(\dot{\theta}_t,\dot{\phi}_t,\dots \dot{q}^{1:M}_t)=-\nabla \tilde{F}(\theta_t,\phi_t,q^{1:M}_t).
\end{equation}
Using the results of \cite{caprioerrorboundsparticle2024}, it is straightforward to obtain sufficient conditions under which the flow converges exponentially fast to $\tilde{F}$'s minimizers. We state a concise version of the result and associated assumptions; we defer the full statement and proof to Appendix~\ref{sec_app:exp_convg_proof}. Let $\tilde{\rho}_{\theta,\phi}(\cdot):=\tilde p_{\theta,\phi}(x, \cdot)$ be the unnormalized tilted posterior, and $\tilde{\pi}_{\theta,\phi}(\cdot) = \tilde\rho_{\theta,\phi}(\cdot)/A_{\theta, \phi}$ be its normalized version with $A_{\theta, \phi}$ being the normalizing constant.
\begin{restatable}[Model regularity]{assumption_app}{modelregularityrestatable}\label{assump:model_reg}
    We assume that
    \begin{enumerate}
        \item For all $z \in \sZ$, $(\theta,\phi) \mapsto \tilde\pi_{\theta,\phi}(z)$ and $(\theta, \phi) \mapsto A_{\theta, \phi}$ are differentiable;
        \item for all $(\theta,\phi) \in \Theta \times \Phi$, $\tilde\pi_{\theta,\phi}$ is twice continuously differentiable;
        \item $\tilde{\rho}_{\theta,\phi}(z)>0$ for all $z\in \sZ$ and $(\theta,\phi) \in \Theta \times \Phi$.
    \end{enumerate}
\end{restatable}
\begin{restatable}[Regularity of solutions]{assumption_app}{solutionregularityrestatable}\label{assump:solution_reg}
    For any initial conditions $(\theta, \phi, q^{1:M})\in \mathcal{M}^{1:M}$,  the gradient flow has a classical solution $(\theta_t, \phi_t,  q_t^{1:M})_{t \geq 0}$ with $(\theta_0, \phi_0, q_0^{1:M}) = (\theta, \phi, q^{1:M})$.  
Furthermore, for all $m\in [M]$ and $t \geq 0$, $q_t^m$ has a Lebesgue density in $\mathcal{C}^{1,2}([0, \infty) \times \sZ, \mathbb{R}^+)$ and $(\theta_t, \phi_t) \in \mathcal{C}^1([0, \infty), \Theta \times \Phi)$.
\end{restatable}
\begin{restatable}[Strong log-concavity]{assumption_app}{stronglogconcavityrestatable}\label{assump:strong_log_concave}
    For all $x\in \sX$, the tilted joint density $\tilde{p}_{\theta, \phi}(x,z)$ is $\lambda$-strongly log-concave in $(\theta, \phi, z)$ for some $\lambda>0$.
\end{restatable}
\begin{theorem}\label{thm:exp_convg_informal}
Suppose Assumptions \Cref{assump:model_reg}-\Cref{assump:strong_log_concave} hold, then $\tilde{\ell}$ has a unique maximizer $(\theta_\star, \phi_\star)$ and the flow converges exponentially fast to it: for $\lambda$ independent of $M$ and all $t\geq 0$, 
\begin{align*}
        \vert\vert(\theta_t,\phi_t)-(\theta_\star,\phi_\star) \vert\vert
        \leq
        \sqrt{\frac{2[\tilde{F}(\theta_0, \phi_0, q_0^{1:M})-\tilde{F}_\star]}{\lambda}}e^{-\lambda t},
\end{align*}
where we denote $\tilde{F}_\star:=\inf_{(\theta,\phi,q^{1:M})}\tilde{F}(\theta,\phi,q^{1:M})$ and $\vert\vert\cdot\vert\vert$ denotes the Euclidean norm.
\end{theorem}
\begin{proof}[Proof (Sketch)]
Given the fact that $(\theta,\phi,z_0)\mapsto \log(\tilde{p}_{\theta,\phi}(x^m,z_0))$ is $\lambda$-strongly concave for each $m$ in $[M]$, it follows that
\begin{align}
\tilde{\ell}(\theta,\phi,z_0^{1:M})
&:=\frac{1}{M}\sum_{m=1}^M\log\tilde{p}_{\theta,\phi}(x^m,z_0^m),\label{eq:tilde-ell}
\end{align}
is also $\lambda$-strongly concave. The result is then an application of the extended log-Sobolev inequality under strong log-concavity (cf. Definition \ref{def:xLSI} and \citet{caprioerrorboundsparticle2024}), whence the exponential convergence follows from Gr{\"o}nwall's inequality.
\end{proof}

\subsection{Approximating the flow and a simple algorithm}
\label{sec:practical_disc}
In almost all cases, (\ref{eq:theta_ode}--\ref{eq:flow}) defines an intractable set of equations and we must approximate it. To do so, we exploit the fact~\cite[Section~2]{kuntz23a} that they form the Fokker-Planck equation satisfied by the law of the following McKean-Vlasov SDE \citep{chaintron_2022}:
\begin{align}
\dd {\theta}_t &= -\frac{1}{M}\sum_{m=1}^M\mathbb{E}_{q^m_t}\left[\nabla_{\theta} \mathcal{L}_D(\theta_t, Z_{0,t}^m)\right] \dd t, \label{eq:gf_theta_ode} \\
    \dd {\phi}_t &= \frac{1}{M}\sum_{m=1}^M\mathbb{E}_{q^m_t}\left[\nabla_{\phi} \log p_{\phi_t}( x^m | Z_{0,t}^m)\right] \dd t, \\
    \dd Z_{0,t}^m &= \nabla_{z_{0}} \mathcal{L}^m(\theta_t, \phi_t, {Z}_{0,t}^m)dt+\sqrt{{2}} \dd W_{t}^m, \label{eq:gf_z_sde}
\end{align}
where we define the last equation for all $m\in [M]$, $q_t^m$ denotes the law of $Z_{0,t}^m$, and $(W_t^{1:M})_{t\geq 0}$ denotes a $d_z\times M$-dimensional Brownian motion. 

However, the laws $(q_t^{1:M})_{t\geq0}$ are unknown and the continuous time axis intractable, so we must approximate the former and discretize the latter. In particular, for each $m$ we approximate the laws using an empirical distribution:
$$
q_t^m(\mathrm{d}z_0) \approx \frac{1}{N}\sum_{n=1}^N\delta_{Z_{0,t}^{m,n}}(\mathrm{d}z_0),
$$ 
which is formed with $N$ weakly-dependent particles ${Z_{0,t}^{m,1},\dots,Z_{0,t}^{m,N}}$ all (approximately) distributed according to $q_t^{m}$; and we discretize the time axis using the Euler-Maruyama scheme:
\begin{align}
    {\theta}_{t+1} &= \theta_t - (h/MN) \sum_{m=1}^M \sum_{n=1}^N\nabla_{\theta} \mathcal{L}_D(\theta_t, Z_{0,t}^{m,n}),\label{eq:disc_theta} 
    \\
    {\phi}_{t+1} &= \phi_t + (h/MN)  \sum_{m=1}^M\sum_{n=1}^N \nabla_{\phi} \log p_{\phi_t}( x^m | Z_{0,t}^{m,n}),\label{eq:disc_phi} 
    \\
    Z_{0,t+1}^{m,n} &= Z_{0,t}^{m,n} + h\nabla_{z_{0}} [\mathcal{L}^m(\theta_t, \phi_t, {Z}_{0,t}^{m,n})]\nonumber \\
    &+\sqrt{{2h}} W_{t}^{m,n}, \quad \forall (m,n) \in [M] \times [N],\label{eq:disc_par}
\end{align}
 where $h>0$ denotes the discretization step size, $W_{1:T}^{1:M,1:N}$ a $T\times M\times N$-dimensional standard Gaussian random variable, and $T$ denotes the total number of steps. Under the conditions in Theorem~\ref{thm:exp_convg_informal}'s premise, and a further Lipschitz gradients assumption, it is straightforward to obtain error bounds for (\ref{eq:disc_theta}--\ref{eq:disc_par}) using the results in~\cite{caprioerrorboundsparticle2024}.
\begin{restatable}[Lipschitz gradient]{assumption_app}{lipschitzgradientrestatable}\label{assump:lipschitz_grad}
The log-likelihood $\tilde\ell^m(\theta, \phi,z_0):=\log \tilde{p}_{\theta,\phi}(x^m,z_0)$ is differentiable and its gradient 
is $L$-Lipschitz for some $L > 0$, \emph{i.e.} for all $(\theta, \phi, z_0), (\theta',\phi', z_0') \in \Theta \times \Phi \times \sZ$,
\begin{align*}
    &\|\nabla \tilde\ell^m(\theta, \phi, z_0) - \nabla \tilde\ell^m(\theta',\phi', z_0')\| \\
    &\qquad \quad \, \leq L\|(\theta, \phi, z_0) - (\theta', \phi', z_0')\|.
\end{align*}
\end{restatable}
\begin{theorem}\label{thrm:error-bound}
Suppose that the premise of Theorem~\ref{thm:exp_convg_informal} and \Cref{assump:lipschitz_grad} hold, and that $\mathcal{R}$ has Lipschitz gradients. For all sufficiently small $h>0$, there exists a constant $C_{h,N}$ of order $\mathcal{O}(h^{1/2}+N^{-1/2})$ independent of $T$, a $\rho \in (0, 1)$, and a $C>0$ independent of $(h,N,T)$, such that
$$\mathbb{E}\left[\vert\vert(\theta_T,\phi_T)-(\theta_\star,\phi_\star) \vert\vert^2\right]^{1/2} \leq C_{h,N}+C\rho^T\quad\forall T\in \mathbb{N},$$
where $(\theta_\star,\phi_\star)$ denote $\tilde{\ell}$'s unique maximizer.
\end{theorem}
\begin{proof}[Proof (Sketch)]
Given the extra Lipschitz gradients assumption on $\log p_{\theta, \phi}(x,z_0)$, the result follows from bounding the spatial and temporal discretization errors separately, which are then combined with the exponential convergence result in \Cref{thm:exp_convg_informal}. We defer the full proof to \Cref{sec_app:discretize_proof}.
\end{proof}
In particular, \Cref{thrm:error-bound} shows that the error can be made arbitrarily small by picking $N,T$ sufficiently large and $h$ sufficiently small.
\subsection{A practical algorithm: IPLD}\label{sec:pratical-alg}
While amenable to theoretical analysis, the algorithm defined by~(\ref{eq:disc_theta}--\ref{eq:disc_par}) performs poorly in practice when applied to models of the sort we are interested in for several reasons. We deal with these one at a time and obtain a practical discretization,  Interacting Particle Latent Diffusion (\textsc{\small IPLD}), which is well-suited to modern computing environments.

\paragraph{Subsampling.} The updates  in (\ref{eq:disc_theta}--\ref{eq:disc_par}) incur a $\mathcal{O}(NMK)$ computational~\footnote{We use the convention in optimization~\citep{Nesterov2004} and only include the dominant gradient evaluation cost.}, which proves prohibitively expensive for all but the smallest of training sets. We overcome this issue by subsampling similarly as in \citet{sgld2011,hodenoisingdiffusionprobabilistic2020}. First, we rewrite~(\ref{eq:disc_theta}--\ref{eq:disc_par}) more concisely as
\begin{align*}
    (\theta_{t+1},\phi_{t+1})&=(\theta_t,\phi_t)-h\nabla_{(\theta,\phi)} \mathcal{L}_t,\\
    Z_{0,t+1}^{1:M,1:N}&=Z_{0,t}^{1:M,1:N}- (MNh)\nabla_{z_0^{1:M,1:N}} \mathcal{L}_t\\
    &+\sqrt{{2h}}W_t^{1:M,1:N},
\end{align*}
where $\mathcal{L}_t:=(MN)^{-1}\sum_{m=1}^M\sum_{n=1}^N\left[\mathcal{L}^m(\theta, \phi, z_{0,t}^{m,n})\right]$ with $\mathcal{L}^m$ defined in \eqref{eq:single_loss_drift}. Then, we replace the loss $\mathcal{L}_t$ with the following unbiased estimate:
\begin{align}
    &\hat{\mathcal{L}}_t := \hat{\mathcal{L}}(\theta_t,\phi_t,z^{1:M,1:N}_{0,t})\label{eq:batch_loss}\\
    &:= \frac{1}{N |\mathcal{B}|} \sum_{(m,n)\in \mathcal{B} \times [N]} \left[
    \hat{\mathcal{L}}_D(\theta_t, z_{0,t}^{m,n}) - \log p_\phi ( x^m | z_{0,t}^{m,n}) \right] \notag
\end{align}
where $\mathcal{B}$ denotes a subset of $[M]$ of size $|\mathcal{B}|$ drawn uniformly at random (and independently of all other random variables), and $\hat{\mathcal{L}}_{\mathcal{D}}$ denotes the $\mathcal{O}(1)$-cost unbiased estimate of the diffusion loss $\mathcal{L}_{\mathcal{D}}$ specified in Appendix~\ref{app:reparam_diffusion}; so bringing down the cost to $\mathcal{O}(N|\mathcal{B}|)$. We add noise to the particles' updates scaled by $\sqrt{|\mathcal{B}|/M}$ to match the variance of the noise in the full-batch updates in~\eqref{eq:disc_par}. See \Cref{algo:train_basic} below for pseudocode.
\begin{algorithm}[H]
    \caption{Interacting Particle Latent Diffusion (\gls*{IPLD})}\label{algo:train_basic}
    \begin{algorithmic}[1]
        \linespread{1.2}\selectfont
        \State \textbf{Inputs:} Dataset $\{ x^m\}_{m\in [M]}$, 
        stepsize $h$,
        \\ particles $z_{0,0}^{1:M,1:N}$, parameters $\phi,\theta$
        \While{not converged}
            \State Sample a mini-batch of indices $\mathcal{B}\subset [M]$
            \State Compute $\hat{\mathcal{L}}_t = \hat{\mathcal{L}}
            (\theta_t,\phi_t,z^{1:M,1:N}_{0,t})$ in \eqref{eq:batch_loss}
        \For{$(m,n)\in [M] \times [N]$}
        \If{$m\in \mathcal{B}$}
        \State ${z_{0,t}^{m,n} \gets z_{0,t}^{m,n} -(MNh)\nabla_{ z_{0}^{m,n}}\hat{\mathcal{L}}_t}$
        \EndIf
        \State $z_{0,t+1}^{m,n} \gets z_{0,t}^{m,n} +\sqrt{\frac{2h|\mathcal{B}|}{M}}W_{t}^{m,n}$ 
    \EndFor
    \State $\theta_{t+1} \gets \theta_t - h \nabla_{\theta} \hat{\mathcal{L}}_t$
    \State $\phi_{t+1} \gets \phi_t - h \nabla_{\phi} \hat{\mathcal{L}}_t$
\EndWhile
\State \textbf{Outputs:} $(\theta_{t}, \phi_t, z_{0,t}^{1:M, 1:N})$
\end{algorithmic}
\end{algorithm}
\paragraph{Distributed Training.} (\ref{eq:disc_theta}--\ref{eq:disc_par}) require $\mathcal{O}(MN)$ memory. However, \gls*{IPLD} is well-suited for distributed training: $Z^{m,n}_{0,t}$'s update for a given pair $(m,n)$ is independent of that for all other pairs, and requires only access to the $m$th datapoint. Thus, in distributed setups, we allocate each accelerator a disjoint subset of the training set and it handles the corresponding updates for all $N$ particles, reducing per-device memory costs and communication costs. This contrasts with autoencoder-based \glspl*{LDM} that necessitate synchronizing gradients for encoders (typically, deep networks) when trained end-to-end. 

\paragraph{Reweighting and annealing.} We re-weight the diffusion loss $\mathcal{L}_{\mathcal{D}}$ similarly as in \citet{hodenoisingdiffusionprobabilistic2020}, as this re-weighting is known to improve sample quality~\citep{songMaximumLikelihoodTraining2021,kingmaunderstandingdiffusionobjectives2023}; see Appendix~\ref{app:reparam_diffusion} for details. Furthermore, we anneal the KL term in the free energy, replacing the summand in~\eqref{eq:free-energy} with: $-\mathbb{E}_{q^m}\left[\log {p_{\phi}( x^m|z_0)}\right] + \gamma_t \KL(q^m(z_0)||p_\theta(z_0))$, where $\gamma_t:[0,\infty)\to(0,\infty)$ denotes the (non-decreasing) annealing schedule. This is a practice commonly used when training deep \glspl*{LVM} to encourage accurate reconstruction during the early stages of training~\citep{sonderby2016ladderVAE,fu2019cyclicalannealingschedulesimple,vahdatNVAEDeepHierarchical2020,vahdatscorebasedgenerativemodeling2021}. Adjusting $\tilde{F}$ correspondingly results in $\gamma_t$ premultiplying $\mathcal{L}_{\mathcal{D}}$ in~\eqref{eq:fpe_qt}; which in turn corresponds to adjusting the noise levels in~(\ref{eq:gf_z_sde},\ref{eq:disc_par}) by multiplying a factor of $\sqrt{\gamma_t}$ (cf. Appendix~\ref{app:kl_anneal_deriv} for details).

\paragraph{Preconditioning and momentum.} When training simpler latent variable models with algorithms similar to ours, \cite{kuntz23a,limMomentumParticleMaximum2024} observed that preconditioning the models' parameters similarly as in RMSProp~\citep{tieleman2012lecture} mitigated ill-conditioning and stabilized the training. \cite{limMomentumParticleMaximum2024} further noted gains in both training speeds and test-time performance by incorporating momentum into the parameter and particle updates. Here we precondition and incorporate momentum for both the parameters and particles. We use optimizers like AdamW~\citep{loshchilov2019decoupledweightdecayregularization} for the former and adaptive Langevin algorithms \citep{Li2016, kim2020stochasticgradientlangevindynamics} for the latter. 
Lastly, we choose different step sizes $h_\theta$, $h_\phi$, and $h_z$ respectively for $\theta$, $\phi$, and the particles to account for the multi-scale nature of the interacting particle system \citep{akyildiz2024multiscaleperspectivemaximummarginal, Pavliotis2008}. We provide the pseudocode for the full training algorithm in Appendix~\ref{app:precond_training_alg}.

\section{RELATED WORK}\label{sec:rel_work}
\paragraph{Interacting Particle Algorithms.}~Interacting particle systems have long been the foundation of much  statistical and optimization methodology, e.g.~\citep{DelMoral2004, kennedy1995swarm}. Their use in algorithms that fit latent variable models by jointly updating model's parameters and a latent-space particle cloud to maximize the likelihood is more recent:~\citet{kuntz23a} proposed several such algorithms and~\citet{limMomentumParticleMaximum2024} improved their performance by incorporating momentum into the algorithms' updates (see also \cite{encinar2024proximalinteractingparticlelangevin,sharrock24a,oliva2024kineticinteractingparticlelangevin,marion2025implicitdiffusionefficientoptimization,akyildiz2025interacting,marks2025learninglatentenergybasedmodels}). 
The theoretical guarantees of our algorithm are derived from the results in \citet{caprioerrorboundsparticle2024}. Similar error bounds have been established in \citet{akyildiz2025interacting} for an alternative approximation to the gradient flow featuring noise in the parameter updates. Also related to our method are numerous works approximating gradient flows with particles to obtain alternatives to conventional \gls*{VI}; e.g.,~\citet{liuSteinVariationalGradient2016,lambertVariationalInferenceWasserstein2022,duncan23svgd, diao2023forwardbackwardgaussianvariationalinference, limParticleSemiimplicitVariational2024}. Lastly, there is a growing body of work exploring training methodologies for generative models in the ambient space using particle-based methods~\citep{arbel2019maximummeandiscrepancygradient,yiMonoFlowRethinkingDivergence2023,franceschiUnifyingGANsScorebased2023,galashovDeepMMDGradient2024,zhouInductiveMomentMatching2025}.

\paragraph{Latent Diffusion Models.}
There have been various attempts at incorporating \glspl*{DM} into \glspl*{LVM} as the prior $p_{\theta}(z)$. \citet{vahdatscorebasedgenerativemodeling2021} proposed to jointly train a continuous-time score-based \gls*{DM}~\citep{songscorebasedgenerativemodeling2021} in the latent space of a deep hierarchical \gls*{VAE} \citep{vahdatNVAEDeepHierarchical2020}. Similarly, \citet{wehenkeldiffusionpriorsvariational2021} considered the joint training of a discrete-time \gls*{DM} with a conventional \gls*{VAE}. \citet{cohen22vqdiff} instead applied a discrete-time diffusion prior to a \gls*{VQVAE} \citep{vandenoord17vqvae}. The seminal work by \citet{rombachhighresolutionimagesynthesis2022} can also be viewed as learning a diffusion prior post training of the \gls*{VAE}. Additionally, several studies have proposed a trainable forward process for diffusion models \citep{kim2022maximum, bartosh2024neural,bartosh2024flow,nielsen2024diffenc}, which can be viewed as generalizations of latent diffusion models.
More recently, combinations of diffusion-based priors with other types of probabilistic models in hierarchical \glspl*{VAE} have been explored, such as \gls*{EBM} \citep{cuiLearningLatentSpace2024} and Variational Mixture of Posteriors prior \citep{kuzina2024hierarchical}. Additionally, the work by \citet{silvestriTrainingConsistencyModels2025} can also be viewed as an \gls*{VAE} with a consistency model \citep{songConsistencyModels2023} learning the aggregated posterior. Concurrent to our work, \citet{leng2025repaeunlockingvaeendtoend} also considers the joint training of a latent diffusion model with the \gls*{VAE}, but their objective is instead based on the REPresentation Alignment (REPA) loss \citep{yuRepresentationAlignmentGeneration2024}, which require forward passes through pretrained vision foundational models like DINO \citep{caron2021emerging,oquab2024dinov2learningrobustvisual}.

\paragraph{Decoder-only Models.} 
Connected to our work are other methods that, similarly to us, optimize latent variables rather than relying on an encoder network. Several studies \citep{han2017abp, nijkamp2019shortrun,nijkamp2020inf,pangLearningLatentSpace2020,nijkamp2022mcmc,yu2023learning} have considered short-run and persistent Langevin dynamics within an \gls*{EM} framework to train latent variable models featuring a top-down generator network. Also related to our work are approaches like \citet{bojanowski2018optimizing,luiseGeneralizationPropertiesOptimal2020} that optimize latent distributions as an alternative or enhancement to \glspl*{GAN}~\citep{goodfellow2014generative}.

\begin{figure}[htbp]
\centering
\includegraphics[width=0.99\linewidth]{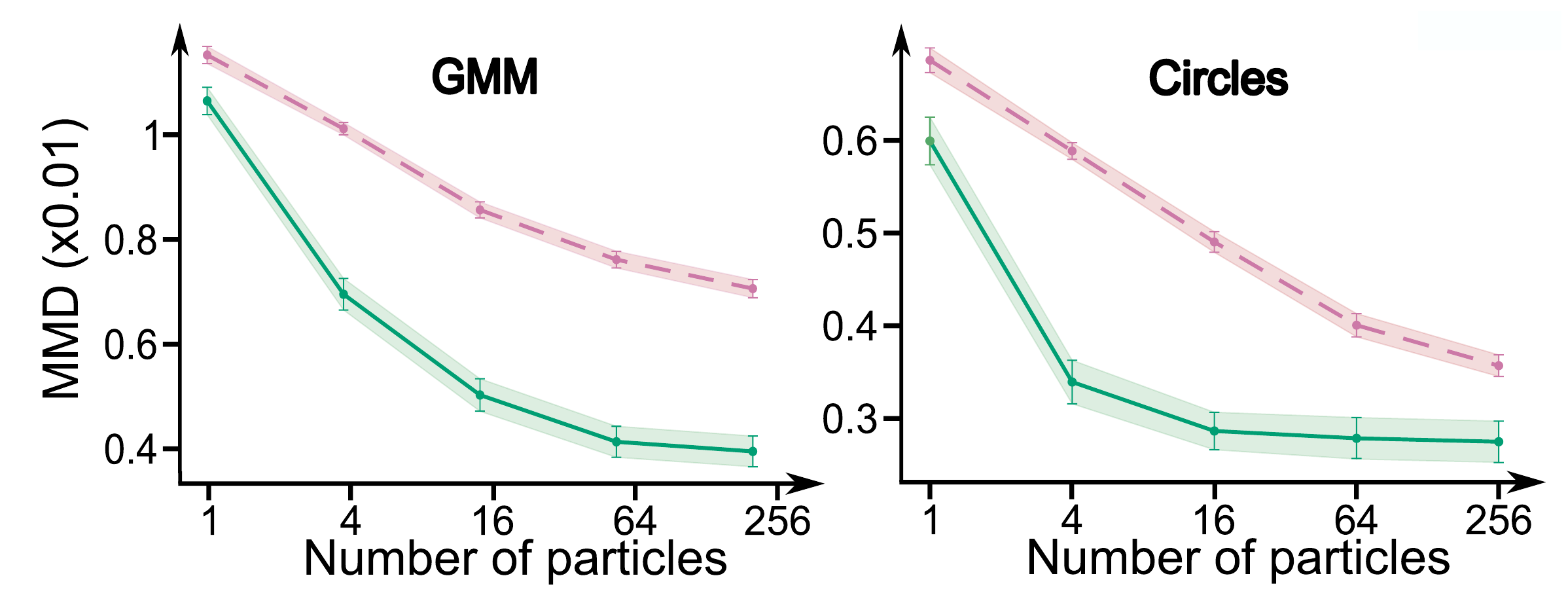}
\caption{Estimated MMD between the ground truth and the distribution learned with \gls*{IPLD} (solid line) and \textsc{\small DiffusionVAE} (dashed line) for the GMM (left) and concentric circles (right) datasets. Shaded regions indicate $\pm1$ standard error.}\label{fig:ldm_gmm_comparison}
\end{figure}
\begin{figure}[htbp]
     \centering
     \includegraphics[width=0.99\linewidth]{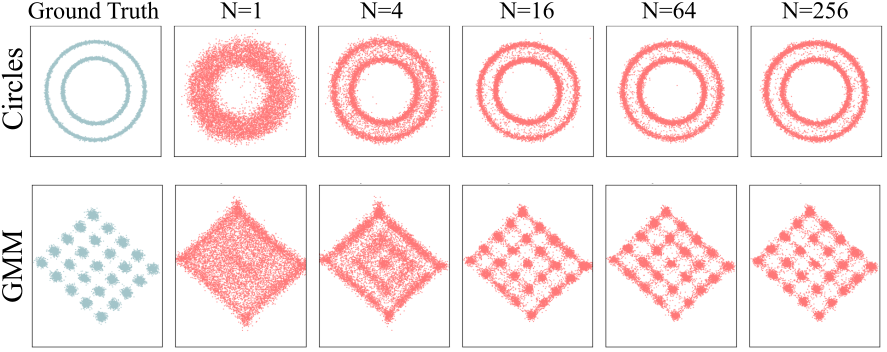}
\caption{Samples generated by \gls*{IPLD} trained with varying numbers of particles for 1,000 steps (showing the first two out of $d_x=64$ dimensions).}\label{fig:toy_examples}
\end{figure}

\begin{table*}[htp]
    \centering
\begin{subtable}[t]{0.45\textwidth}
\centering
\vspace{0pt}
\resizebox{\textwidth}{!}{
\begin{tabular}{l c}
        \toprule
        \textbf{Models} & \textbf{FID($\downarrow$)} \\
        \midrule
        \multicolumn{2}{l}{\textbf{Decoder-only LVMs}} \\
        PGD \citep{kuntz23a} & 101.4 \\
        MPGD \citep{limMomentumParticleMaximum2024} & 91.7 \\
        DAMC \citep{yu2023learning} & 57.72 \\
        LP-EBM \citep{pangLearningLatentSpace2020} & 70.15 \\
        EBM-SR \citep{nijkamp2019shortrun} & 44.50 \\
        \midrule
        \multicolumn{2}{l}{\textbf{VAE}} \\
        DiffusionVAE \citep{wehenkeldiffusionpriorsvariational2021} & 153.1 \\
        {DiffusionVAE}* & 62.07 \\
        \midrule
        \multicolumn{2}{l}{\textbf{Ours}} \\
        IPLD (1 particle)            & 51.60   \\
        IPLD (5 particles)            & 48.30 \\
        IPLD (10 particles)           &  46.95 \\
        \bottomrule
\end{tabular}
}
\caption{\textbf{FID($\downarrow$) of models trained on CIFAR-10}. We report both the original results from \citet{wehenkeldiffusionpriorsvariational2021} and our re-implementation (denoted by "*") of the \gls*{DiffusionVAE} using the same architecture (see \Cref{app:implementaion}).}
\end{subtable}
\begin{minipage}[t]{0.45\textwidth}
    \vspace{0pt}
    \begin{subtable}[t]{\textwidth}
    \centering
    \resizebox{\textwidth}{!}{
    \begin{tabular}{l c}
        \toprule
        \textbf{Models} & \textbf{FID($\downarrow$)}  \\
        \midrule
        DAMC \citep{yu2023learning} & 30.83 \\
        LP-EBM \citep{pangLearningLatentSpace2020} & 37.87 \\
        EBM-SR \citep{nijkamp2019shortrun} & 23.03 \\
        {DiffusionVAE} \citep{wehenkeldiffusionpriorsvariational2021} & 67.95  \\
        {DiffusionVAE}* & 29.24  \\
        \midrule
         \multicolumn{2}{l}{\textbf{Ours}} \\
        IPLD (1 particle)   & 22.86 \\
        IPLD (5 particles)    & 21.55\\
        IPLD (10 particles)  & 21.43 \\
        \bottomrule
    \end{tabular}
}
\caption{\textbf{FID($\downarrow$) of models trained on CelebA64}.}
\end{subtable}
\medskip
\begin{subtable}[t]{\textwidth}
    \centering
    \resizebox{\textwidth}{!}{
    \begin{tabular}{l c}
        \toprule
        \textbf{Models} & \textbf{FID($\downarrow$)} \\
        \midrule
        DAMC \citep{yu2023learning} & 18.76 \\
        LP-EBM \citep{pangLearningLatentSpace2020} & 29.44 \\
        {DiffusionVAE}* \citep{wehenkeldiffusionpriorsvariational2021} & 20.89 \\
        \midrule
         \multicolumn{2}{l}{\textbf{Ours}} \\
        IPLD (1 particle)   & 17.55 \\
        IPLD (5 particles)    & 14.02\\
        IPLD (10 particles)  &  13.51 \\
        \bottomrule
    \end{tabular}
    }
    \caption{\textbf{FID($\downarrow$) of models trained on SVHN}.}
\end{subtable}
\end{minipage}
\caption{FID scores for CIFAR-10, SVHN, and CelebA64 estimated using 50,000 samples.}
\label{tab:fid_scores}
\end{table*}
\section{EXPERIMENTS}
\label{sec:numerics}
We evaluate \gls*{IPLD}'s performance on two
synthetic datasets (Section~\ref{sec:synth}) and three image datasets (Section~\ref{sec:image})~\footnote{Code available at \href{https://github.com/akyildiz-group/IPLD-release}{\faGithub~ IPLD-release}}. In the synthetic case, we benchmark against the closest \gls*{VI}
analogue to \gls*{IPLD} we have been able to locate in the literature: \gls*{DiffusionVAE} \citep{wehenkeldiffusionpriorsvariational2021}; see the end of Section~\ref{sec:dldm}  for more details. For the image datasets, we additionally compare with several other decoder-only \glspl*{LVM} (cf.~Section~\ref{sec:rel_work}).

\subsection{Synthetic Datasets}\label{sec:synth}
We first validate the effectiveness of our method on two toy datasets. We generate these by first sampling from a distribution on a $2$-dimensional latent space, and then mapping the samples into a $64$-dimensional ambient space using a matrix $A \in \R^{2\times 64}$ with orthogonal rows. We consider 1)  a \gls*{GMM} with $25$ components as in \citet{boys2024tweedie,cardoso2024monte}, and 2) a distribution concentrated on concentric circles similar to that\footnote{{\scriptsize\url{https://scikit-learn.org/stable/modules/generated/sklearn.datasets.make_circles.html}}} in \texttt{scikit-learn}.
We train both \gls*{DiffusionVAE} and \gls*{IPLD} for 1,000 steps and varying numbers of particles $N=1,4,16,64,256$ (in the case of \gls*{DiffusionVAE}, $N$ refers to the number of samples drawn from the encoder at each step). For both datasets, increasing $N$ increases the quality of the samples generated by \gls*{IPLD} (\Cref{fig:toy_examples}). To quantitatively compare \gls*{IPLD} and \gls*{DiffusionVAE}, we estimate the \gls*{MMD} \citep{jmlr:v13:gretton12a} between the distribution learned by each and the ground truth. The \gls*{MMD} values are averaged over $50$ training runs and the standard error is reported. For all particle numbers, \gls*{IPLD} outperformed \gls*{DiffusionVAE} (\Cref{fig:ldm_gmm_comparison}). See Appendix~\ref{app:synth} for more details.
\begin{figure}[hbtp]
      \centering
      \includegraphics[width=\linewidth]{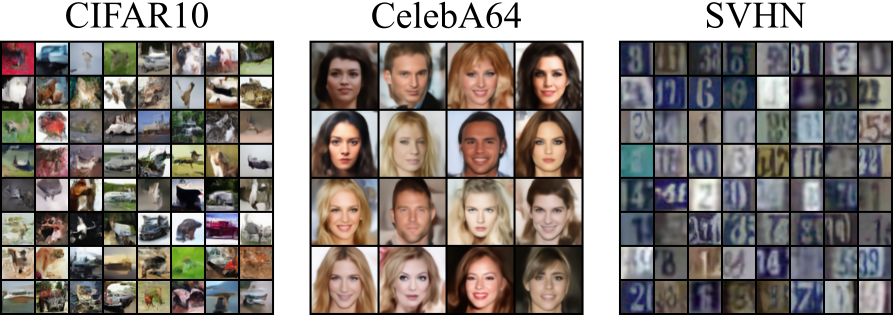}
      \caption{Samples generated with \gls*{IPLD} trained on CIFAR-10, CelebA64, and SVHN. The CelebA64 samples have been curated for better visualization. See Appendix~\ref{app:additional_samples} for additional samples.}
      \label{fig:generated_samples}
 \end{figure}
\subsection{Image Modeling}\label{sec:image} Next,  we test our model on three image datasets: CIFAR-10 \citep{krizhevsky2009learning}, SVHN \citep{svhn2011}, and CelebA64 \citep{liu2015faceattributes}. As before, we benchmark against \gls*{DiffusionVAE} using the same decoder, diffusion model, and $4\times 8\times 8$-dimensional latent space as for \gls*{IPLD}. We train both models for the same number of epochs ($400$ for CIFAR10 and SVHN, $200$ for CelebA64); see Appendix~\ref{app:image_exp_details} for details. For \gls*{IPLD}, we vary the particle number $N=1,5,10$ (we use a single particle for \gls*{DiffusionVAE} because back-propagating the corresponding gradients through the model's encoder with more particles proved too memory-consuming for our hardware). We warm-start the multi-particle \gls*{IPLD} runs using a single particle as in \citet{kuntz23a}; see Appendix~\ref{app:warm_start} for details. In both cases, we compute \gls*{FID} scores for the trained model (\Cref{tab:fid_scores}). \gls*{IPLD} outperforms \gls*{DiffusionVAE}, and its performance increases with the number of particles used for training. 
\gls*{IPLD} also proves competitive with previous decoder-only models discussed in Section~\ref{sec:rel_work} (see also \Cref{tab:fid_scores}).

\section{DISCUSSION}\label{sec:conclusion}
Like many before us, we recast fitting an \gls*{LDM} via maximum likelihood as minimizing a free energy functional (Section~\ref{sec:dldm}). Unlike others, we then identify a gradient flow that minimizes the functional and approximate it using systems of interacting particles (Sections~\ref{sec:gradient_flow}, \ref{sec:practical_disc}), and we theoretically characterize both the flow and the approximations (Theorems~\ref{thm:exp_convg_informal},~\ref{thrm:error-bound}) under standard assumptions. Following these steps, we obtain \gls*{IPLD} (Section~\ref{sec:pratical-alg}): a theoretically-principled algorithm for end-to-end \gls*{LDM} training. Because it entails updating a cloud of particles and each particle's update is independent of the others',  \gls*{IPLD} is well-suited for modern distributed compute environments and is easy to scale. 

In numerical experiments involving both synthetic and image data, \gls*{IPLD} compares favorably with relevant benchmarks (Section~\ref{sec:numerics}). However, our results on the image datasets fall short of today's state-of-the-art. We believe this may be due to the relatively small latent spaces, decoder, and DM architectures we use (e.g., compare Appendix~\ref{app:image_exp_details} with \citet[Appendices~G.1,2]{vahdatscorebasedgenerativemodeling2021})
and our limited computational budget (e.g., compare Appendix~\ref{app:computational_considerations} with \citet[Appendix E]{rombachhighresolutionimagesynthesis2022}), rather than a fundamental limitation of the approach.
Our work may be limited by the fact that, to date, \glspl*{LDM} trained in a two-stage manner (e.g.,~\cite{rombachhighresolutionimagesynthesis2022}) have achieved results that those trained end-to-end have not, and our approach is fundamentally an end-to-end one. However, recent works such as~\cite{leng2025repaeunlockingvaeendtoend} have demonstrated promising breakthroughs in end-to-end \gls*{LDM} training, and their innovations are relatively straightforward to incorporate in our algorithm. Further improvements may be possible through more careful subsampling schemes than that in Section~\ref{sec:pratical-alg} and the use of variance reduction techniques \citep{zou2018subsampled}. Indeed, we hope our work paves the way to other more effective particle-based algorithms for \gls*{LDM} training.

Lastly, our theoretical analysis is limited to the simplified algorithm in Section~\ref{sec:practical_disc}. However, we believe that it may be possible to extend the analysis to more practical versions using techniques along the lines of those used to study adaptive optimizers \citep{malladi2024sdesscalingrulesadaptive} and momentum-enriched interacting particle systems \citep{oliva2024kineticinteractingparticlelangevin,limMomentumParticleMaximum2024}.
\section*{Acknowledgements}
We thank the anonymous reviewers for their constructive comments. TW is supported by the Roth
Scholarship from the Department of Mathematics, Imperial College London. We acknowledge computational resources and support provided by the Department of Mathematics and the Imperial College Research Computing Service, DOI: 10.14469/hpc/2232.

\bibliographystyle{apalike}
\bibliography{bibliography}

\appendix
\thispagestyle{empty}
\onecolumn
\aistatstitle{Supplementary Material for Training Latent Diffusion Models with Interacting Particle Algorithms}

\section{Derivations}\label{app:A}
\subsection{Derivation of the Training Objective}\label{app:reparam_diffusion}

We derive the training objective, the tilted free energy $\tilde{F}(\theta, \phi, q^{1:M})$ in \eqref{eq:free-energy}  and include a derivation of the standard reparametrized diffusion objective $\mathcal{L}_D(\theta, z_0)$ analogous to \citet{hodenoisingdiffusionprobabilistic2020, pmlr-v37-sohl-dickstein15}. For convenience, we re-state the objective for a single data point here and omit the superscript $m$:
\begin{equation}\label{eq:app_free_energy}
    \tilde{F}(\theta, \phi, q) := \mathbb{E}_{q(z_{0:K})}\left[ \log \frac{q(z_{0:K})}{p_{\theta, \phi}( x, z_{0:K})} \right].
\end{equation}

\paragraph{Negative Lower Bound.} To see that $-\tilde{F}(\theta, \phi, q)$ is a lower bound of the log-likelihood $\log p_{\theta, \phi}(x)$, we first note that the tilted free energy in \eqref{eq:app_free_energy} can be re-written as:
\begin{align*}
    \tilde{F}(\theta, \phi, q) &= \mathbb{E}_{q(z_0) q(z_{1:K}|z_0)}\left[ \log \frac{q(z_0) q(z_{1:K}|z_0)}{p_{\theta, \phi}(x|z_{0})p_\theta(z_{1:K}|z_0)p_\theta(z_0)} \right] \\
    &= \mathbb{E}_{q(z_0)}\left[ \mathbb{E}_{q(z_{1:K}|z_0)}\left[ \log \frac{q(z_0)}{p_{\theta, \phi}(x, z_0)} + \log \frac{q(z_{1:K}|z_0)}{p_\theta(z_{1:K}|z_0)} \right] \right] \\
    &= \mathbb{E}_{q(z_0)}\left[ \log \frac{q(z_0)}{p_{\theta, \phi}(x, z_0)} \right] + \mathbb{E}_{q(z_0)}[\KL(q(z_{1:K}|z_0) \| p_\theta(z_{1:K}|z_0))],
\end{align*}
where we have assumed the independence structure of the decoder $p_{\phi}( x|z_{0:K})=p_{\phi}( x|z_0)$; the first term is the usual free energy $F(\theta, \phi, q)=\mathbb{E}_{q(z_0)}\left[ \log {q(z_0)}-\log{p_{\theta, \phi}(x, z_0)} \right]$ in \eqref{eq:fen78awfnye8awnfeua} and the second term is non-negative. 
It is thus easy to see that the tilted free energy $\tilde{F}(\theta, \phi, q)$ is obtained by replacing $p_{\theta,\phi}(x^m,z_0)$ in~\eqref{eq:fen78awfnye8awnfeua} with the tilted $\tilde{p}_{\theta,\phi}(x^m,z_0):=p_{\theta,\phi}(x^m,z_0)\exp(-\mathcal{R}(\theta,z_0))$, where $\mathcal{R}(\theta, z_0)= \KL\big(q(z_{1:K}|z_{0}) \|p_{\theta}(z_{1:K}|z_{0})\big)$. 

Using Jensen's inequality, we see the negative of the first term is an upper bound of $-\log p_{\theta, \phi}(x)$:
\begin{align*}
    \mathbb{E}_{q(z_0)}\left[ -\log \frac{p_{\theta, \phi}(x, z_0)}{q(z_0)} \right] \geq -\log \mathbb{E}_{q(z_0)}\left[ \frac{p_{\theta, \phi}(x, z_0)}{q(z_0)} \right] = -\log p_{\theta, \phi}(x),
\end{align*}
which leads to the sequence of inequalities:
\begin{equation}
    \tilde{F}(\theta, \phi, q) \geq F(\theta, \phi, q) \geq -\log p_{\theta, \phi}(x).
\end{equation}

An alternative form of the objective $\tilde{F}(\theta, \phi, q)$ amenable to computation can be obtained by decomposing it as follows:
\begin{align}
    \tilde{F}(\theta, \phi, q) &= \int \underbrace{\int \log \frac{q(z_{1:K}|z_{0})}{{p_{\theta}(z_{0:K})}} q(z_{1:K}|z_{0}) \dd z_{1:K} \ }_{\text{Diffusion Objective}  \ \mathcal{L}_D(\theta, z_0)}q(z_{0})\dd z_{0} - {\int \log \frac{{p_{\phi}( x|z_{0})}}{q(z_{0})} q(z_{0}) \ \dd z_{0}}.\label{eq_app:decomposed_obj}
\end{align}

\subsubsection{Derivation of the Diffusion Objective}
The diffusion objective $\mathcal{L}_D(\theta, z_0)$ can be re-written as:
\begin{align}
    \mathcal{L}_D(\theta, z_0) 
    &=\int\left(-\log p_K(z_K)+\log \frac{q_1(z_1|z_0)}{p_{\theta, 1} (z_0|z_1)}+\sum_{k=2}^K \log \frac{q_k(z_k| z_{k-1})}{p_{\theta, k}(z_{k-1}|z_k)}\right)q(z_{1:K}|z_{0})\dd z_{1:K}.\label{eq:app_R}
\end{align}
We can re-write $q_k(z_k|z_{k-1})$ using Bayes' rule as:
\begin{align*}
q_k(z_{k}|z_{k-1})
&=q_k(z_{k-1}|z_{k},z_0)\frac{q(z_{k}|z_0)}{q(z_{k-1}|z_0)}.
\end{align*}

Combined with the Markov assumption $q(z_{1:K}|z_{0})=q(z_K|z_0)\prod_{k=2}^K q_k(z_{k-1}| z_{k}, z_{0})$,  we can rewrite \eqref{eq:app_R} above as:
\begin{align}
    \mathcal{L}_D(\theta, z_0) 
    &= \mathbb{E}_{q(z_{1:K}|z_0)}\left[ \KL(q(z_K|z_0) \| p(z_K)) - \log p_{\theta, 1} (z_0|z_1) \right]  \\
    &+ \mathbb{E}_{q(z_{1:K}|z_0)}\left[\sum_{k=2}^K \KL ({q_k(z_{k-1}| z_{k}, z_0)}\|{p_{\theta, k}(z_{k-1}|z_k)}) \right]
\end{align}
\paragraph{Gaussian Transition Kernel.}
For the forward process, we take
\begin{equation}
    q_k(z_k| z_{k-1})=\mathcal{N}(z_k; \sqrt{1-\beta_k}z_{k-1}, \beta_k I),
\end{equation}
where $\{\beta_k\}_{k=1}^K$ is a linear noise schedule with $\beta_k = (1-k/K)\beta_0 + (k/K)\beta_K$. Using the notation $\alpha_{k}:=1-\beta_{k}$ and $\bar{\alpha}_{k}:=\prod_{j=1}^{k}\alpha_{j}$, we can derive the $k$-step transition kernel:
\begin{equation}
    q(z_k|z_0) = \mathcal{N}(z_k; \sqrt{\bar{\alpha}_k}z_0, (1-\bar{\alpha}_k)I), \quad \forall k=1,\ldots, K.
\end{equation}
By Bayes' rule, we can compute:
\begin{align}
    q_k(z_{k-1} | z_{k}, z_0) &= \mathcal{N}(z_{k-1}; \tilde{\mu}_k(z_k, z_0), \beta_k I), \\
    \tilde{\mu}_{k}(z_k,z_0) &= \frac{\sqrt{\bar{\alpha}_{k-1}}\beta_{k}}{1-\bar{\alpha}_{k}}{z}_{0}+\frac{\sqrt{\alpha_{k}}(1-\bar{\alpha}_{k-1})}{1-\bar{\alpha}_{k}}{z}_{k}.
\end{align}
For the backward process, we set:
\begin{align}
    p_K(z_K) &= \mathcal{N}(0, I) \\
    p_{\theta, k}(z_{k-1}|z_{k})&=\mathcal{N}(z_{k-1}; \mu_{\theta, k}(z_k), \beta_k I) \quad \forall k=2, \ldots, K \\
    p_{\theta, 1}(z_{0}|z_1) &= \mathcal{N}(z_{k-1}; \mu_{\theta, k}(z_k),I)
\end{align}
Using the formula for the Kullback-Leibler divergence between isotropic Gaussian distributions, we obtain:
\begin{align}
        \KL ({q_k(z_{k-1}| z_{k}, z_0)}\|{p_{\theta, k}(z_{k-1}|z_k)}) &=\frac{1}{2}\log\left(\frac{1-\bar{\alpha}_k}{{\beta}_k}\right)-\frac{d_z}{2}+\frac{d_z{\beta}_k+\|\tilde{\mu}_{k}(z_k,z_0)-\mu_{\theta, k}(z_k)\|^2}{2 (1-\bar{\alpha}_k)} \\
        \KL({q(z_K|z_0)}\|{p_\theta(z_K)})&=-\log(1-\bar{\alpha}_K)+\frac{d_z(1-\bar{\alpha}_K)^2-d_z+\bar{\alpha}_K\|{z_0}\|^2}{2} \label{eq_app:extra_term} \\
         -\log(p_\theta(z_0|z_1))&=\frac{d_z}{2}\log(2\pi (1-\bar{\alpha}_1))+\frac{\|z_0-\mu_{\theta, 1}(z_1)\|^2}{2(1-\bar{\alpha}_1)}\label{eq_app:discrete_gaussian}.
\end{align}
Note that in contrast to the standard pixel-space \gls*{DDPM}, we have the extra term in \eqref{eq_app:extra_term} depending on the latent $z_0$; we also use a Gaussian distribution with identity variance for $p_\theta (z_0|z_1)$ in \eqref{eq_app:discrete_gaussian} instead of the independent discrete decoder from \citet{hodenoisingdiffusionprobabilistic2020}. 

\paragraph{Reparametrization.} We adopt the same $\epsilon$-prediction reparameterization as in \citet{hodenoisingdiffusionprobabilistic2020} to write:
\[
\mu_{\theta, k}({z}_k) = \tilde{\mu}_k \left( {z}_k, \frac{1}{\sqrt{\alpha_k}} \left( {z}_k - \sqrt{1 - \bar{\alpha}_k} \, \epsilon_{\theta, k}({z}_k) \right) \right) 
= \frac{1}{\sqrt{\alpha_k}} \left( {z}_k - \frac{\beta_k}{\sqrt{1 - \bar{\alpha}_k}} \epsilon_{\theta, k}({z}_k) \right),
\]
which gives:
\begin{align}
        \KL ({q_k(z_{k-1}| z_{k}, z_0)}\|{p_{\theta, k}(z_{k-1}|z_k)}) &=\frac{1}{2}\log\left(\frac{1-\bar{\alpha}_k}{{\beta}_k}\right)-\frac{d_z}{2}+\frac{d_z{\beta}_k+\beta_k\|\epsilon-\epsilon_{\theta, k}(z_k)\|^2}{2 \alpha_k (1-\bar{\alpha}_k)},
\end{align}
where $\epsilon \sim \mathcal{N}(0, I)$ is a random vector in $\R^{d_z}$ sampled independently from a standard Gaussian. Additionally, using the forward $k$-step transition kernel, we can write $z_k = z_k(z_0, \epsilon)=\sqrt{\bar{\alpha}_k}z_0+\sqrt{1-\bar{\alpha}_k}\epsilon$. Up to a constant term independent of $\theta$, we have the loss function $\mathcal{L}_D(\theta, z_0; \epsilon)\approx \hat{\mathcal{L}}(\theta,z_0;\epsilon, k)$, where we sample $k \sim \text{Unif}(1, \ldots, K)$ and $\epsilon \sim \mathcal{N}(0, I)$. We thus have the following loss:
\begin{align}\label{app_eq:diffusion_likelihood}
    \hat{\mathcal{L}}_D(\theta,z_0;\epsilon, k):=\left\{
    \begin{array}{ll}\frac{1}{2}\|{z_0-\mu_{\theta,1}(z_1(z_0,\epsilon))\|}^2 + \frac{\bar{\alpha}_K\|{z_0\|}^2}{2}&\text{if }k=1\\
    \frac{\beta_k}{2\alpha_k(1-\bar{\alpha}_k)}\|{\epsilon-\epsilon_{\theta,k}(z_k(z_0,\epsilon))\|}^2 + \frac{\bar{\alpha}_K\|{z_0\|}^2}{2}&\text{if }2\leq k\leq K\\
    \end{array}
    \right.
\end{align}
In practice, we follow the approach in \citet{hodenoisingdiffusionprobabilistic2020} to use a simplified version of the diffusion objective:
\begin{align}\label{app:one_sample_simplified_diffusion}
    \hat{\mathcal{L}}_{\text{simple}}(\theta, z_0; \epsilon)\approx \hat{\mathcal{L}}_{\text{simple}}(\theta,z_0;\epsilon, k):=\left\{
    \begin{array}{ll}\frac{1}{2}\|{z_0-\mu_{\theta,1}(z_1(z_0,\epsilon))\|}^2 + \frac{\bar{\alpha}_K\|{z_0\|}^2}{2}&\text{if }k=1\\
    \|{\epsilon-\epsilon_{\theta,k}(z_k(z_0,\epsilon))\|}^2 + \frac{\bar{\alpha}_K\|{z_0\|}^2}{2}&\text{if }2\leq k\leq K\\
    \end{array}
    \right.
\end{align}
More specifically, we draw a minibatch of indices $\mathcal{B} \subseteq [M]$, standard Gaussian random variables $\epsilon^{m,n} \sim \mathcal{N}(0,I)$, and $t^{m,n} \sim \text{Unif}(1, \ldots, K)$ for $m\in \mathcal{B}, n\in [N]$ to approximate the loss by:
\begin{equation}\label{eq_app:simplified_diffusion}
    \hat{\mathcal{L}}_{\text{simple}}(\theta, z_0^{\mathcal{B},1:N}) = \frac{1}{|\mathcal{B}|N} \sum_{(m,n)\in \mathcal{B}\times [N]} \hat{\mathcal{L}}_{\text{simple}}(\theta, z_0^{m,n}; \epsilon^{m,n}, t^{m,n}).
\end{equation}
To simplify the notation, we will denote the one-sample approximation  $\hat{\mathcal{L}}_{\text{simple}}(\theta, z_0^{m,n}; \epsilon^{m,n}, t^{m,n})$ with $\hat{\mathcal{L}}_{\text{simple}}(\theta, z_0^{m,n})$ as in \Cref{sec:pratical-alg} and we use the same simplified notation in the rest of the appendix.

\subsection{The Euclidean-Wasserstein Geometry on \texorpdfstring{$\R^{d_{\theta}} \times \R^{d_\phi}\times \mathcal{P}_2(\R^{d_z})^{M}$}{Theta x Phi x P2(Rdz)^M}}
\label{app:otto_calc}
To obtain the gradient flow in \Cref{sec:gradient_flow}, we view the product space of parameters and probability distributions $\R^{d_{\theta}} \times \R^{d_\phi}\times \mathcal{P}_2(\R^{d_z})^M$ as a Riemannian manifold. We will equip this product space with suitable tangent spaces and Riemannian metrics, which enable us to define the gradients and perform optimization. We omit the subscript on $z_0$ and denote it by $z$ for simplicity throughout this section.

\paragraph{Tangent spaces.} Using the same set-up as in \citet{kuntz23a}, we concatenate the parameters as $\vartheta:=(\theta,\phi)\in \R^D$, where $D=d_\theta+d_\phi$, and assume the approximate posterior is a distribution with strictly positive density w.r.t. the Lebesgue measure and support $\sZ=\R^{d_z}$. We let the product manifold be $\mathcal{M}^{1:M}:=\R^D \times \mathcal{P}_2(\sZ)^M$. For each point $(\vartheta, q^{1:M}) \in \mathcal{M}^{1:M}$, we can define the tangent space $T\mathcal{M}^{1:M}$ and its dual $T^*\mathcal{M}^{1:M}$ as:
\begin{align*}
    T_{(\vartheta, q^{1:M})}\mathcal{M}^{1:M} &= T_\vartheta\R^D \times \prod_{m\in[M]} T_{q^{m}}\mathcal{P}_2(\mathsf{Z})\\
    T_{(\vartheta, q^{1:M})}^*\mathcal{M}^{1:M} &= T_\vartheta^*\R^D \times \prod_{m\in[M]} T_{q^{m}}^*\mathcal{P}_2(\mathsf{Z})
\end{align*}
where we note that $T_\vartheta \R^D \cong T_\vartheta^* \R^D \cong \R^{D}$. For each $q^m \in \mathcal{P}_2(\sZ)$, we also define the tangent and cotangent space of $\mathcal{P}_2(\sZ)$ at $q^m$ as in \citet{otto31012001}:
\begin{align*}
    T_{q^m}\mathcal{P}_2(\mathsf{Z}) &:= \left\{
    r:\mathsf{Z} \to \R: \int r(z) \dd z=0
    \right\} \\
    T_{q^m}^* \mathcal{P}_2(\mathsf{Z}) &:= \left\{
    f: \sZ \to \R
    \right\} / \R
\end{align*}
where the cotangent space $T_{q^m}^*\mathcal{P}_2(\mathsf{Z})$ is identified with the space of equivalence classes of functions that differ by an additive constant.

Furthermore, we define the \textit{duality pairing}, which is a triplet $(T_{(\vartheta, q^{1:M})}, T_{(\vartheta, q^{1:M})}^*, \langle\cdot,\cdot\rangle)$ with $$\langle\cdot,\cdot\rangle: T_{(\vartheta, q^{1:M})}\mathcal{M}^{1:M} \times T_{(\vartheta, q^{1:M})}^*\mathcal{M}^{1:M} \to \R$$ being a bilinear map that is the sum of the Euclidean inner product on the parameter space and the duality pairing on the Wasserstein-2 space:
\begin{align*}
    \langle (\tau, r^{1:M}), (v, f^{1:M}) \rangle &:= \langle \tau, v \rangle + \sum_{m\in[M]} \langle r^m, f^m \rangle,
\end{align*}
where the Wasserstein-2 duality pairing is $(T_{q^m}\mathcal{P}_2(\sZ),T_{q^m}^*\mathcal{P}_2(\sZ), \langle\cdot,\cdot\rangle)$ with $\langle\cdot,\cdot\rangle: T_{q^m}\mathcal{P}_2(\sZ) \times T_{q^m}^*\mathcal{P}_2(\sZ) \to \R$ given by:
\begin{equation}
    \langle r^m,f^m\rangle:=\int f^m(z)r^m(z) \dd z, \quad \forall r^m \in T_{q^m}\mathcal{P}_2(\sZ), f^m\in T_{q^m}^*\mathcal{P}_2(\sZ).
\end{equation}

\paragraph{The metric.}\label{app:metric} We can thus equip the manifold $\mathcal{M}^{1:M}$ with Riemannian metric $g=(g_{(\vartheta, q^{1:M})})_{(\vartheta,q^{1:M})\in \mathcal{M}^{1:M}}$
defined as:
$$
g_{(\vartheta, q^{1:M})}((\tau, r^{1:M}), (\tau', r'^{1:M})) := \langle (\tau, r^{1:M}), G_{(\vartheta, q^{1:M})}(\tau', r'^{1:M}) \rangle, \quad \forall (\vartheta, q^{1:M}) \in \mathcal{M}^{1:M},
$$
where $G_{(\vartheta, q^{1:M})}: T\mathcal{M}^{1:M} \to T^*\mathcal{M}^{1:M}$ is an invertible, self-adjoint, and positive-definite linear map. We will only consider tensors in a block-diagonal form (cf. \citet[Chapter~2]{lee2018riemannian}), which defines the metric via:
\begin{equation}
    \langle (\tau, r^{1:M}), G_{(\vartheta, q^{1:M})}(\tau', r'^{1:M}) \rangle = \langle \tau, \mathrm{G}_{\vartheta} \tau' \rangle + \frac{1}{M}\sum_{m\in[M]} \langle r^m, \mathbf{G}_{q^m}^W r'^m \rangle  \quad \forall (\tau, r^{1:M}) \in T\mathcal{M}^{1:M}, \ (\vartheta, q^{1:M}) \in \mathcal{M}^{1:M}.
\end{equation}
Here we take $\mathrm{G}_{\vartheta}: T\R^D \to T^* \R^D$ as the identity map (which corresponds to the usual Euclidean metric) and $\mathbf{G}_{q}^W: T\mathcal{P}_2(\mathsf{Z}) \to T^*\mathcal{P}_2(\mathsf{Z})$ to be the tensor for Wasserstein-2 distance on $\mathcal{P}_2(\mathsf{Z})$ defined through its inverse:
\begin{equation}
    (\mathbf{G}_q^W)^{-1}f:=-\nabla_{z} \cdot (q \nabla_{z}f), \quad \forall f\in C^\infty(\mathcal{P}_2(\mathsf{Z})).
\end{equation}
When the context is clear, we will also write $\langle\tau, \tau'\rangle_{\vartheta} =\langle \tau, \mathrm{G}_{\vartheta} \tau' \rangle$ and $\langle r^m, r'^m \rangle_{q^m} = \langle r^m, \mathbf{G}_{q^m}^W r'^m \rangle$.

\paragraph{The gradient.} To perform gradient descent on manifolds, we further need an analogue of the gradient for a smooth function $F: \mathcal{M}^{1:M}\to \R$ as in the Euclidean space. This is a vector field $\nabla F: \mathcal{M}^{1:M}\to T\mathcal{M}^{1:M}$ satisfying:
\begin{equation}
    \begin{aligned}
    g_{(\vartheta, q^{1:M})} (\nabla F(\vartheta, q^{1:M}), (\tau, r^{1:M}))
    &= \lim_{t \to 0} \frac{F(\vartheta + t\tau, q^{1:M} + tr^{1:M}) - F(\vartheta, q^{1:M})}{t} \\
    &\quad \forall (\tau, r^{1:M}) \in T\mathcal{M}^{1:M},\; (\vartheta, q^{1:M}) \in \mathcal{M}^{1:M}.
    \end{aligned}
\end{equation}
By expressing in local coordinates, we can compute the gradient as:
\begin{equation}\label{eq:app_inv_tensor_grad}
    \nabla F(\vartheta, q^{1:M}) = G_{(\vartheta, q^{1:M})}^{-1} \delta F(\vartheta, q^{1:M}), \quad \forall (\vartheta, q^{1:M})\in \mathcal{M}^{1:M},
\end{equation}
where $\delta F: \mathcal{M}^{1:M}\to T^*\mathcal{M}^{1:M}$ is the first variation of $F$ defined as the unique cotangent vector field satisfying:
\begin{equation}\label{eq:app_first_variation}
    \langle (\tau, r^{1:M}), \delta F(\vartheta, q^{1:M}) \rangle = \lim_{t \to 0} \frac{F(\vartheta + t\tau, q^{1:M} + tr^{1:M}) - F(\vartheta, q^{1:M})}{t} 
\quad \forall (\tau, r^{1:M}) \in T\mathcal{M}^{1:M},\; (\vartheta, q^{1:M}) \in \mathcal{M}^{1:M}.
\end{equation}
\paragraph{The distance function.} To define the distance on the manifold $\mathcal{M}^{1:M}$, we recall that the Riemannian distance function is defined as the length of the minimizing geodesics $\gamma: [0,1]\to \mathcal{M}^{1:M}$ between $(\vartheta, q^{1:M})$ and $(\vartheta', q'^{1:M})$:
\begin{align}
    \mathsf{d}_{\mathcal{M}^{1:M}}((\vartheta, q^{1:M}), (\vartheta', q'^{1:M}))^2 := \inf_{\gamma} \int_0^1 g_{(\vartheta_t,q_t^{1:M})}(\dot{\gamma}(t), \dot{\gamma}(t)) \dd t
    &= \|\vartheta-\vartheta'\|_2^2 + \frac{1}{M}\sum_{m=1}^M \mathsf{d}_{W_2}(q^m, (q')^m)^2,
\end{align}
where the second equality follows from~\citet[Section 4.3]{otto31012001} and $W_2$ is the Wasserstein-2 distance on $\mathcal{P}_2(\sZ)$.

\subsection{Derivation of the Gradient Flow}
\label{app:grad_deriv}
Equipped with the tools above, we now derive the gradients of the averaged free energy $\tilde{F}(\theta, \phi, q^{1:M})=M^{-1}\sum_{m=1}^M \tilde{F}^m(\theta, \phi, q^m)$, where $\tilde{F}^m$ is the single-datapoint free energy for $x^m$. We first compute the single-datapoint first variation \eqref{eq:app_first_variation}, and then lift it componentwise to the averaged objective:
\begin{lemma}\label{app:lemma_1st_var}
     Given the free energy of the form in \eqref{eq_app:decomposed_obj}:
\begin{equation}
    \tilde{F}(\theta, \phi, q)  =  \int \mathcal{L}_D(\theta, z) q(z)\dd z - \int \log \left(\frac{p_\phi( x|z)}{q(z)}\right) q(z)\dd z.
\end{equation}
The first variation $\delta \tilde{F}(\theta, \phi, q)=(\delta_q \tilde{F}(\theta, \phi, q), \delta_\theta \tilde{F}(\theta, \phi, q), \delta_\phi \tilde{F}(\theta, \phi, q))$ is given by:
\begin{align}
    \delta_q \tilde{F}(\theta, \phi, q) &= \log \left(\frac{q(z)}{p_\phi( x|z)}\right) + \mathcal{L}_D(\theta, z)\label{eq_app:1st_var_q}   \\
    \delta_\theta \tilde{F}(\theta, \phi, q) &= \int \nabla_\theta \mathcal{L}_D(\theta, z) q(z) \dd z\label{eq_app:1st_var_theta}  \\
    \delta_\phi \tilde{F}(\theta, \phi, q) &= - \int \nabla_\phi \log p_\phi( x|z) q(z) \dd z\label{eq_app:1st_var_phi} 
\end{align}
\end{lemma}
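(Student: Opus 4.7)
The plan is to decompose $\tilde{F}$ into three pieces and compute the first variation of each separately, then assemble them using the defining identity \eqref{eq:app_first_variation}. Write
\[
\tilde{F}(\theta,\phi,q) \;=\; \underbrace{\int \mathcal{L}_D(\theta,z)\, q(z)\,\dd z}_{A(\theta,q)} \;-\; \underbrace{\int \log p_\phi(x|z)\, q(z)\,\dd z}_{B(\phi,q)} \;+\; \underbrace{\int q(z)\log q(z)\,\dd z}_{C(q)},
\]
so that $A$ and $B$ are linear in $q$ while $C$ is the negative differential entropy. The $\theta$- and $\phi$-variations are immediate: only $A$ depends on $\theta$, only $B$ depends on $\phi$, and interchanging derivative and integral (which one justifies under mild regularity of $(\theta,z)\mapsto \mathcal{L}_D(\theta,z)$ and $(\phi,z)\mapsto\log p_\phi(x|z)$, e.g.~via dominated convergence) yields \eqref{eq_app:1st_var_theta} and \eqref{eq_app:1st_var_phi} directly.

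For $\delta_q\tilde{F}$, fix $(\theta,\phi)$ and any $m \in T_q\mathcal{P}(\sZ)$, so in particular $\int m(z)\,\dd z = 0$. Since $A$ and $B$ are linear in $q$, their Gateaux derivatives at $q$ in the direction $m$ are
\[
\lim_{t\to 0}\frac{A(\theta,q+tm)-A(\theta,q)}{t} = \int \mathcal{L}_D(\theta,z)\, m(z)\,\dd z,
\]
and analogously for $B$; by the duality pairing these identify $\mathcal{L}_D(\theta,\cdot)$ and $-\log p_\phi(x|\cdot)$ as the corresponding first variations. For $C$, a direct computation gives
\[
\frac{\dd}{\dd t}\Big|_{t=0}\!\int (q+tm)\log(q+tm)\,\dd z \;=\; \int m(z)\log q(z)\,\dd z + \int m(z)\,\dd z \;=\; \int m(z)\log q(z)\,\dd z,
\]
where the zero-mean constraint on $m$ kills the second integral (and this is precisely the reason the cotangent space is taken modulo constants).

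Summing the three contributions and using the duality pairing gives
\[
\lim_{t\to 0}\frac{\tilde{F}(\theta,\phi,q+tm)-\tilde{F}(\theta,\phi,q)}{t} \;=\; \int \Bigl[\mathcal{L}_D(\theta,z) - \log p_\phi(x|z) + \log q(z)\Bigr]\, m(z)\,\dd z,
\]
for every $m\in T_q\mathcal{P}(\sZ)$, which by definition \eqref{eq:app_first_variation} identifies $\delta_q\tilde{F}(\theta,\phi,q)$ (as an element of $T_q^*\mathcal{P}(\sZ) = \{f:\sZ\to\R\}/\R$) with the bracketed expression, i.e.\ \eqref{eq_app:1st_var_q}. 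The only subtle point is the entropy variation, where one must (i)~justify differentiation under the integral for $q\log q$ (this is standard assuming $q$ has finite entropy and suitable tails on $m$) and (ii)~track the fact that the constant term arising from $\frac{\dd}{\dd t}t\int m\, dz$ drops out thanks to the zero-mean constraint---or equivalently, is absorbed into the equivalence class defining $T_q^*\mathcal{P}(\sZ)$. Everything else reduces to routine manipulations.
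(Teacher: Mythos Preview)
Your proposal is correct and follows essentially the same route as the paper: the parameter variations are obtained by differentiating under the integral, and the $q$-variation is computed by linearity on the integrand terms that are linear in $q$ together with a direct expansion of the entropy contribution, using $\int m=0$ to absorb the constant. The only cosmetic difference is that you split the functional into three pieces (isolating the entropy $C(q)$) whereas the paper keeps $\int q\log(q/p_\phi)$ bundled and expands $(q+tm)\log(q+tm)$ via the one-variable identity $(z+t)\log(z+t)=z\log z+[\log z+1]t+o(t)$; the underlying computation is identical.
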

\begin{proof}
    For \cref{eq_app:1st_var_theta,eq_app:1st_var_phi}, it suffices to note that by grouping the parameters into $\tilde\theta := (\theta, \phi) \in \Theta=\R^{D}$, we can compute by Taylor expansion:
\begin{align*}
        \tilde{F}(\tilde{\theta} + t\tau, q) &= \tilde{F}(\tilde{\theta},q) + \int f(\tilde{\theta} + t\tau, z) q(z) \dd z -\int f(\tilde{\theta}, z) q(z) \dd z \\
&= \tilde{F}(\tilde{\theta},q) + \int [t \langle \tau, \nabla_{\tilde{\theta}} f(\tilde{\theta}, z) \rangle + o(t)] q(z) \dd z \\
&= \tilde{F}(\tilde\theta, q) + t \left\langle \tau, \int \nabla_{\tilde{\theta}} f(\tilde{\theta}, z) q(z) \dd z \right\rangle + o(t)
\end{align*}
where we have set $f(\tilde\theta, z):= \mathcal{L}_D(\theta, z) - \log p_\phi( x|z)$ and used $f(\tilde\theta + t\tau, z) = f(\tilde\theta, z) + t \langle \tau, \nabla_{\tilde\theta} f(\tilde\theta, z) \rangle + o(t)$. 

For \eqref{eq_app:1st_var_q}, we recall the first variation is linear, thus it suffices to compute $\delta_q \int \mathcal{L}_D(\theta, z) q(z)\dd z$ and $\delta_q\int (\log {q(z)} - \log {p_\phi( x|z)}) q(z)\dd z$ separately. Now note that:
\begin{align*}
    \int \mathcal{L}_D(\theta, z) [q(z)+tr(z)]\dd z &= \int \mathcal{L}_D(\theta, z) q(z)\dd z + t\int \mathcal{L}_D(\theta, z) r(z) \dd z
\end{align*}
and for the latter we note that $\log(z + t)(z + t) = \log(z)z + [\log(z) + 1]t + o(t)$, from which it follows:
\begin{align*}
    \int \log \left(\frac{q(z)+tr(z)}{p_\phi( x|z)}\right) [q(z)+tr(z)]\dd z 
    &= \int (\log q(z)) q(z)+ [\log q(z) + 1] tr(z) +o(t) \dd z \\
    &- \int \log p_\phi( x|z) q(z) \dd z - t\int \log p_\phi( x|z) r(z) \dd z \\
    &= \int \log \left(\frac{q(z)}{p_\phi( x|z)}\right) q(z)\dd z \\&+ t \int \log \left(\frac{q(z)}{p_\phi( x|z)}\right) r(z) \dd z + o(t)
\end{align*}
where we have used that $\int r(z)\dd z=0$ for all $r\in T\mathcal{P}_2(\mathsf{Z})$.
\end{proof}

\begin{proposition}
    Under the normalized product geometry of \Cref{app:metric}, the gradients of the averaged free energy $\tilde{F}(\theta, \phi, q^{1:M})$ with respect to $(\theta, \phi, q^{1:M})$ are given by:
    \begin{align}
    \nabla_\theta \tilde{F}(\theta, \phi, q^{1:M}) &= \frac{1}{M}\sum_{m=1}^M \int \left[\nabla_{\theta} \mathcal{L}_D(\theta, z)\right] q^m(z) \dd z \\
    \nabla_\phi \tilde{F}(\theta, \phi, q^{1:M}) &= -\frac{1}{M}\sum_{m=1}^M \int \left[\nabla_{\phi} \log p_{\phi}( x^m | z)\right] q^m(z) \dd z \\
    \nabla_{q^m} \tilde{F}(\theta, \phi, q^{1:M}) &=  \nabla_{z} \cdot \left[
        q^m(z) \nabla_{z} \left[
            \log \left(
                \frac{p_{\phi}( x^m|z)}{q^m(z)}
            \right) - \mathcal{L}_D(\theta, z)
        \right]
    \right], \quad \forall m\in[M].
\end{align}
\end{proposition}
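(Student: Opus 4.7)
The plan is to apply the machinery developed in Appendices~\ref{app:otto_calc} and~\ref{app:grad_deriv}. The identity~\eqref{eq:app_inv_tensor_grad} tells us that on the product manifold $M = \Theta \times \Phi \times \mathcal{P}(\sZ)$ the Riemannian gradient equals the inverse metric tensor applied to the first variation, i.e.\ $\nabla \tilde{F} = G^{-1}\delta \tilde{F}$. Because we have chosen the metric $G$ to be block-diagonal (Euclidean on the two parameter blocks, Wasserstein-2 on the measure block), the computation decouples into three independent pieces, and all the first variations are already available from Lemma~\ref{app:lemma_1st_var}.

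First I would handle the Euclidean blocks. For both $\theta$ and $\phi$ the corresponding metric tensor is the identity map $\mathrm{G}_\theta = \mathrm{G}_\phi = \mathrm{Id}$, so the gradients coincide with the first variations, and the formulas for $\nabla_\theta \tilde{F}$ and $\nabla_\phi \tilde{F}$ follow immediately from \eqref{eq_app:1st_var_theta} and \eqref{eq_app:1st_var_phi} respectively.

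Next I would address the Wasserstein block. Using \eqref{eq:app_inv_tensor_grad} with the inverse tensor $(\mathbf{G}_q^W)^{-1}f = -\nabla_z \cdot(q \nabla_z f)$ specified in Appendix~\ref{app:metric}, I would write
\[
\nabla_q \tilde{F}(\theta,\phi,q) \;=\; (\mathbf{G}_q^W)^{-1}\,\delta_q \tilde{F}(\theta,\phi,q) \;=\; -\nabla_z \cdot\bigl(q(z)\,\nabla_z\,\delta_q \tilde{F}(\theta,\phi,q)\bigr).
\]
Substituting the expression $\delta_q \tilde{F}(\theta,\phi,q) = \log(q(z)/p_\phi(x|z)) + \mathcal{L}_D(\theta,z)$ from~\eqref{eq_app:1st_var_q} and pulling the minus sign inside (using $-\log(q/p_\phi) = \log(p_\phi/q)$) yields the stated formula for $\nabla_q \tilde{F}$.

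There is no serious obstacle here: the structural result~\eqref{eq:app_inv_tensor_grad} together with the first variations from Lemma~\ref{app:lemma_1st_var} essentially gives the proposition by direct substitution. The only mildly delicate point is justifying that the cotangent vector $\delta_q \tilde{F}$ lies in the domain where the formal inverse $(\mathbf{G}_q^W)^{-1}$ makes sense (i.e.\ that $\log(q/p_\phi) + \mathcal{L}_D$ is sufficiently smooth and decays enough for the divergence identity $\int m \cdot f\,\dd z = \int (-\nabla\cdot(q\nabla f))\cdot\Psi\,\dd z$, with $m=-\nabla\cdot(q\nabla\Psi)$, to hold via integration by parts with vanishing boundary terms); under the mild regularity hypotheses implicit in Appendix~\ref{app:otto_calc} this is standard and I would just note it in a remark rather than belabor it.
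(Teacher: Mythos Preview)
Your proposal is correct and follows essentially the same approach as the paper: both invoke~\eqref{eq:app_inv_tensor_grad} to write $\nabla \tilde F = G^{-1}\delta \tilde F$, exploit the block-diagonal structure of the metric, and then substitute the first variations from Lemma~\ref{app:lemma_1st_var} together with the explicit Wasserstein inverse tensor $(\mathbf{G}_q^W)^{-1}f=-\nabla_z\cdot(q\nabla_z f)$. The paper's proof is just a terser version of yours, omitting the regularity caveat you add at the end.
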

\begin{proof}
    Recall that on the manifold $\mathcal{M}^{1:M}$, the gradient can be computed from the first variation via the metric tensor \eqref{eq:app_inv_tensor_grad}:
    \begin{equation*}
        \nabla_{(\tilde{\theta},q^{1:M})} \tilde{F}(\tilde{\theta},q^{1:M}) = G_{(\tilde{\theta}, q^{1:M})}^{-1} \delta \tilde{F}(\tilde{\theta}, q^{1:M}),
    \end{equation*}
    where $\tilde{\theta}=(\theta, \phi)$. For the parameter block, the metric is Euclidean, so the $\theta$- and $\phi$-gradients are the corresponding averaged first variations. For each distribution $q^m$, we have
    \[
        \delta_{q^m}\tilde{F}(\theta,\phi,q^{1:M}) = \frac{1}{M}\delta_q \tilde{F}^m(\theta,\phi,q^m),
    \]
    where $\delta_q \tilde{F}^m$ is given by \Cref{app:lemma_1st_var} with $x$ replaced by $x^m$ and $q$ by $q^m$. Since the inverse metric on the $m$th Wasserstein block is $M(\mathbf{G}_{q^m}^W)^{-1}$, the factor $M^{-1}$ in $\delta_{q^m}\tilde{F}$ is cancelled, yielding
    \[
        \nabla_{q^m}\tilde{F}(\theta,\phi,q^{1:M}) = (\mathbf{G}_{q^m}^W)^{-1}\delta_q \tilde{F}^m(\theta,\phi,q^m).
    \]
    Recalling that $(\mathbf{G}_q^W)^{-1}f=-\nabla_{z} \cdot (q \nabla_{z}f)$ for all $f\in C^\infty(\mathcal{P}_2(\mathsf{Z}))$ gives the desired result.
\end{proof}

\subsection{The Full Training Algorithm}
\label{app:precond_training_alg}

We detail the algorithmic considerations in \Cref{sec:pratical-alg} and provide the pseudocode for practical training in \Cref{algo:train_1}.

\subsubsection{KL Divergence Annealing} 
\label{app:kl_anneal_deriv}
We note that annealing the KL divergence with $\gamma_t$ is equivalent to applying the same weighting to the gradient of the entropy and the prior terms. In our case, we modify the gradient flow in (\ref{eq:theta_ode}--\ref{eq:flow}) as follows:
\begin{align}
    \nabla_\theta \tilde{F}(\theta_t, \phi_t, q_t^{1:M}) &= \frac{1}{M}\sum_{m=1}^M\mathbb{E}_{q_t^m(z_{0,t})}\left[\nabla_{\theta} \gamma_t\mathcal{L}_D(\theta_t, z_{0})\right], \\
    \nabla_\phi \tilde{F}(\theta_t, \phi_t, q_t^{1:M}) 
    &=-\frac{1}{M}\sum_{m=1}^M\mathbb{E}_{q_t^m(z_{0,t})}\left[\nabla_{\phi} \log p_{\phi_t}( x^m | z_{0,t})\right], \\
    \nabla_{q^m} \tilde{F}(\theta_t, \phi_t, q_t^{1:M}) &= \nabla_{z_{0}} \cdot [
        q^m(z_{0,t}) \nabla_{z_{0}}
            \log 
                p_{\phi_t}( x^m|z_{0,t})] \\
    &- \gamma_t \nabla_{z_{0}}\cdot[q^m(z_{0,t}) \nabla_{z_0} [\log {q^m(z_{0,t})}
             - \mathcal{L}_D(\theta_t, z_{0,t})
        ]
    ],  \quad\forall m\in [M]
\end{align}
which correspond to the Fokker-Planck equation of the following system of SDEs:
\begin{align}
\dd {\theta}_t &= -\frac{1}{M}\sum_{m=1}^M\mathbb{E}_{q^m_t(z_0)}\left[\nabla_{\theta} \gamma_t\mathcal{L}_D(\theta_t, Z_{0,t}^m)\right] \dd t, \\
    \dd {\phi}_t &= \frac{1}{M}\sum_{m=1}^M\mathbb{E}_{q^m_t(z_0)}\left[\nabla_{\phi} \log p_{\phi_t}( x^m | Z_{0,t}^m)\right] \dd t, \\
    \dd Z_{0,t}^m &= \nabla_{z_{0}} [\log(p_{\phi_t}( x^m|Z_{0,t}^m))-\gamma_t\mathcal{L}_D(\theta_t,Z_{0,t}^m)]dt+\sqrt{2 \gamma_t} \dd W_{t}^m, \quad\forall m\in [M]
\end{align}
Thus, we only need to adjust the amount of noise injected to the in \eqref{eq:disc_par} by setting $\sqrt{2h_z}$ to $\sqrt{2h_z \gamma_t}$ and weigh the diffusion loss $\mathcal{L}_D(\theta, z_0)$ by $\gamma_t$. Using the KL annealing scheme and re-weighted diffusion loss \eqref{app:one_sample_simplified_diffusion} discussed in \Cref{sec:pratical-alg}, we thus modify the loss in \eqref{eq:batch_loss} with:
\begin{equation}
        \hat{\mathcal{L}}(\theta_t,\phi_t,z^{1:M,1:N}_{0,t}) := \frac{1}{N |\mathcal{B}|} \sum_{(m,n)\in  \mathcal{B} \times [N]} \left[
    \gamma_t\hat{\mathcal{L}}_{\text{simple}}(\theta_t, z_{0,t}^{m,n}) - \log p_\phi ( x^m | z_{0,t}^{m,n})
    \right],\label{eq:loss_kl_anneal}
\end{equation}
where $\hat{\mathcal{L}}_{\text{simple}}$ is the simplified diffusion loss in Appendix~\ref{app:reparam_diffusion} and we take $\gamma_t = c_{KL}\min (1, t/T)$ for $T$ the number of KL warm-up steps and $c_{KL}$ a constant coefficient.

\subsubsection{Preconditioning and Momentum}
To avoid flat minima and ill-conditioning, we use an adaptive version of Langevin dynamics similar to \citet{Li2016,kim2020stochasticgradientlangevindynamics} based on Adam \citep{kingma2017adammethodstochasticoptimization}. In particular, we update each particle by running:
\begin{align}\label{eq:adam_vz_update}
z_{0,t+1}^{m,n} &\gets z_{0,t}^{m,n} + h_z G_{t}^{m,n} M_{t}^{m,n} +\sqrt{\frac{2h_z}{M}} (G_{t}^{m,n})^{1/2}W_{t}^{m,n}, \quad \forall (m,n) \in [M] \times [N]
\end{align}
where $h_z$ is the stepsize of particle updates, and we compute the moment $M_{t}^{m,n}$  and the preconditioner $G_{t}^{m,n}$ by:
\begin{subequations}\label{eq:vz_momentum}
\begin{align}
	M_{t}^{m,n} &\gets a M_{t-1}^{m,n} + {MN}(1-a) \nabla_{z_{0}^{m,n}}
\hat{\mathcal{L}}(\theta_t,\phi_t,z^{1:M,1:N}_{0,t}) \\
	V_{t}^{m,n} &\gets b V_{t-1}^{m,n} + (1-b) \text{diag}\left( \left[{MN}\nabla_{z_{0}^{m,n}  }
\hat{\mathcal{L}}(\theta_t,\phi_t,z^{1:M,1:N}_{0,t})\right]^{\otimes 2}\right)  \\
			G_{t}^{m,n} &\gets (V_{t}^{m,n} + \epsilon I)^{-1/2}
\end{align}
\end{subequations}
Here $\hat{\mathcal{L}}$ is defined as in \eqref{eq:loss_kl_anneal}, $\epsilon >0$ is a positive constant to avoid numerical instabilities, $0<a,b<1$ are hyperparameters like in Adam \citep{kingma2017adammethodstochasticoptimization} (we take $a=0.9, b=0.999$ following the default), and we used the notation $\text{diag}(v^{\otimes 2})$ to denote the diagonal matrix with the $(i,j)^{th}$ entry being $v_i^2\delta_{ij}$. In practice, we set the preconditioner on the noise in \eqref{eq:adam_vz_update} to $G_{t}^{m,n}=I$ during the first epoch for numerical stability. We also scale the noise by $|\mathcal{B}|^{-1}$ in implementation to prevent the noise from dominating the gradient. We now present the full algorithm for training.
\begin{algorithm}[htbp]
\caption{IPLD with adaptive Langevin dynamics}\label{algo:train_1}
\begin{algorithmic}[1]
\State \textbf{Inputs:} Training data points $\{ x^m\}_{m\in [M]}$,   optimizers $\texttt{Opt}_{\theta}, \texttt{Opt}_{\phi}$, Step sizes $h_\theta, h_\phi, h_z$, KL weights $\gamma_t$, Initial particles $\{z_{0,0}^{m,n}\}_{(m,n) \in [M] \times [N]}$ sampled from $\mathcal{N}(0,I)$, Initial parameters $\phi,\theta$.
\While{not converged}
    \State Sample a mini-batch of indices $\mathcal{B}\subset [M]$
    \State Compute the loss $\hat{\mathcal{L}}(\theta_t,\phi_t,z^{1:M,1:N}_{0,t})$ as in \eqref{eq:loss_kl_anneal}
    \For{$(m,n)\in \mathcal{B}\times[M]$}
        \Comment{Update the particle cloud}
	\State Compute momentum $M_{t}^{m,n}$ and preconditioner $G_{t}^{m,n}$ as in \eqref{eq:vz_momentum}
        \State Sample independent $W_{t,m}^i$ for $(i,m)\in [N]\times[M]$
        \If{$m\in \mathcal{B}$}
	\State $z_{0,t+1}^{m,n} \gets z_{0,t}^{m,n} + h_z G_{t}^{m,n} M_{t}^{m,n}$ 
    \EndIf
    \State $z_{0,t+1}^{m,n} \gets z_{0,t}^{m,n} +\sqrt{\frac{2h_z \gamma_t}{M}} (G_{t}^{m,n})^{1/2}W_{t}^{m,n}$ 
    \EndFor
    \Comment{Update model parameters}
    \State $\theta_{t+1} \gets \texttt{Opt}_\theta(h_\theta, \theta_t, \hat{\mathcal{L}})$
    \State $\phi_{t+1} \gets \texttt{Opt}_\phi(h_\phi, \phi_t, \hat{\mathcal{L}})$
    \State $t \gets t+1$
\EndWhile
\State \textbf{return} $\theta_{t}, \phi_t, z_{0,t}^{1:M,1:N}$
\end{algorithmic}
\end{algorithm}

\section{Experimental Details}
\label{app:implementaion}
For simplicity, we use a Gaussian decoder with identity covariance $p_\phi(x|z_0)=\mathcal{N}(x; g_\phi(z_0), I)$ throughout the experiments, where $g_\phi$ is the decoder network parametrized by $\phi$. We also fix the noise schedule $\{\beta_k\}_{k=1}^{K}$ as the linear schedule used in \citet{hodenoisingdiffusionprobabilistic2020} with $\beta_0=1\times 10^{-4}$ and $\beta_K =0.02$ with $K=1000$.

\subsection{Details on the synthetic experiments}\label{app:synth}
\paragraph{Data.} We create the training data by first drawing 10,000 samples from: 1)  a \gls*{GMM} with $25$ components of dimension $d_z=2$ as in \citet{boys2024tweedie,cardoso2024monte}, and 2) a concentric circle distribution also of dimension $d_z=2$ similar to that in \texttt{scikit-learn}\footnote{{\scriptsize\url{https://scikit-learn.org/stable/modules/generated/sklearn.datasets.make_circles.html}}}\citep{sklearn_api}, then we project the data into a higher-dimensional ambient space with dimension $d_x=64$ using matrices $A \in \R^{d_z \times d_x}$ with orthogonal rows. We generate the matrices using the default implementation in PyTorch of orthogonal weight initialization \citep{saxe2014exactsolutionsnonlineardynamics}. For better visualization, we set the first $2\times 2$ block to the identity for the concentric circle dataset.

\paragraph{Architecture.} For the diffusion backbones, we use a multi-layer perceptron (MLP) with 3 hidden layers, each having 128 hidden units and ReLU activation. The decoder is parametrized by a single linear layer which is equivalent to a matrix with dimension $d_z \times d_x$. For the \gls*{DiffusionVAE}, we implement the encoder also using a single linear layer equivalent to a matrix of size $d_x \times 2d_z$, where it outputs the mean and diagonal of the log-covariance matrix.

\paragraph{Training.} We train all models for 50 epochs with a batch size of 500. We use the AdamW optimizer \citep{loshchilov2019decoupledweightdecayregularization} with a learning rate of $1\times10^{-3}$ for all models. For \gls*{IPLD}, we use a version of adaptive Langevin dynamics \citep{kim2020stochasticgradientlangevindynamics} (cf. \Cref{algo:train_1}) to optimize the particles; we set the step size to $1\times 10^{-1}$ for faster convergence. The KL annealing constant $c_{KL}$ is set to $0.01$ and number of warm up steps is set to $1000$ (cf. \Cref{app:precond_training_alg}).

\paragraph{Evaluation.} We report the \gls*{MMD} \citep{jmlr:v13:gretton12a} between the generated samples and the ground truth on the \gls*{GMM} dataset. For samples $\{x_i\}_{i=1}^m \sim P$ and $\{y_i\}_{j=1}^m \sim Q$, the unbiased Monte-Carlo estimate of \gls*{MMD} is defined as:
\begin{equation*}
    \text{MMD}(P,Q) \approx \frac{1}{m(m-1)} \sum_{i=1}^{m} \sum_{\substack{j=1 \\ j \ne i}}^{m} k( x_i,  x_j)
+ \frac{1}{n(n-1)} \sum_{i=1}^{n} \sum_{\substack{j=1 \\ j \ne i}}^{n} k(y_i, y_j)
- \frac{2}{mn} \sum_{i=1}^{m} \sum_{j=1}^{n} k( x_i, y_j),
\end{equation*}
where we use the Radial Basis Function (RBF) kernel $k: \mathbb{R}^{d_x} \times \mathbb{R}^{d_x} \to \mathbb{R}_{\geq0}$ with a bandwidth $\gamma=0.1$ defined as $k({ x},{y})=\exp\left(-\gamma {{||{ x}-{y}||^{2}}}\right)$. A total of $m=n=10,000$ samples were generated via the reverse process to compute the approximation of the \gls*{MMD}. We repeat the training runs with 50 different random seeds (thus different initializations) and report the mean and standard error. The error bars shown in \Cref{fig:ldm_gmm_comparison} are the 1-standard error $\hat{\sigma}/\sqrt{50}$, where the standard deviation $\hat{\sigma}$ is estimated using \texttt{numpy}'s default implementation.

\subsection{Details on the image experiments}
\label{app:image_exp_details}
\paragraph{Architecture for \gls*{IPLD}.} We adopt the Diffusion Transformer architecture \texttt{DiT-S}, the smallest configuration from \citet{peebles2023scalablediffusionmodelstransformers}, as the backbone for our latent diffusion model. We use a patch size of $1\times 1$, as our latent space of dimension $4\times 8\times 8$ is relatively small. For the Gaussian decoder $p_\phi( x|z_0)$, we use a simplified version of \gls*{VAE}'s decoder without attention from \citet{rombachhighresolutionimagesynthesis2022}.
\begin{table}[htp]
    \centering
    \begin{tabular}{lccc}
\hline
                                         & SVHN           & CIFAR-10      & CelebA64    \\ \hline
z-shape                                  & 8 x 8 x 4      & 8 x 8 x 4     & 8 x 8 x 4   \\
Base channels                            & 128            & 128           & 128         \\
Number residual blocks per resolution    & 2              & 2             & 2           \\
Channel Multiplier                       & 1,2,3          & 1,2,3         & 1,2,2,3     \\
Batch Size                               & 128            & 128           & 16          \\
Number of Epochs                         & 400            & 400           & 200         \\
Diffusion Learning Rate                  & 1e-4           & 1e-4          & 1e-4        \\ 
Decoder Learning Rate                    & 2e-4           & 2e-4          & 2e-4        \\ 
Particle Step Size                       & 5e-2           & 5e-2          & 5e-2        \\ 
KL Warm-up Steps                         & 40000          & 40000         & 40000       \\
KL Constant Coefficient                         & 0.001          & 0.01        & 0.01       \\
EMA Decay Rate                           & 0.999          & 0.999         & 0.9999       \\
EMA Start Step                           & 40000          & 40000         & 40000       \\ 
\hline
\end{tabular}
    \vspace{0.2cm}
    \caption{An overview of \gls*{IPLD}'s settings for the experiments. Here z-shape refers to the shape of the latent vector, which in the case of \gls*{IPLD}, is the shape of a single particle. We refer the readers to \gls*{LDM} \citep{rombachhighresolutionimagesynthesis2022} and the implementation thereof for more details on the architecture.}
    \label{tab:app_arch}
\end{table}
We comment that in line with recent discussions on latent diffusions \citep{dieleman2025latents}, the spatial structure of the latent space is crucial for the successful training of a latent diffusion model. Therefore, we choose to use a decoder that can induce explicit spatial structures in the latents $z$ instead of the ones used in the \gls*{EBM} literature \citep{yu2023learning}, which compress the images to a flattened vector. However, we note our decoders have similar or fewer parameters than the implementation of \citet{yu2023learning} (both around 18 million parameters).

\paragraph{Architecture for \gls*{DiffusionVAE}.} We use the exact same diffusion backbone and decoder for our re-implementation of \gls*{VAE} with diffusion prior \citep{wehenkeldiffusionpriorsvariational2021}. For the encoder $q_\phi(z_0| x)$ of the \gls*{VAE}, we parametrize it with a Gaussian $\mathcal{N}(z_0;\mu_\phi( x), \Sigma_\phi)$, where $\Sigma_\phi$ is a diagonal matrix. The encoder's architecture also follows from \citet{rombachhighresolutionimagesynthesis2022},  which is an inverted version of the decoder.
\paragraph{Training and Evaluation.} As in the synthetic experiments, we use the AdamW optimizer 
\citep{loshchilov2019decoupledweightdecayregularization} for with a learning rate of $2\times10^{-4}$ for decoders (and encoder for \gls*{DiffusionVAE}) and $1\times 10^{-4}$ for diffusion backbones; an exponential learning rate schedule with rate $\gamma = 0.999$ was used for the decoder. For \gls*{IPLD}, we use adaptive Langevin dynamics \citep{kim2020stochasticgradientlangevindynamics} with step size $5\times 10^{-2}$ and exponential learning rate decay with $\gamma=0.995$ across all configurations following~\citet{kuntz23a}. Similar to \citet{song2020improved}, we maintain an Exponential Moving Average (EMA) of the weights of the diffusion model for evaluation; the hyperparameters for EMA are reported in \Cref{tab:app_arch}. To evaluate the generative performance, we calculate the \gls*{FID} \citep{heusel2017gan} between the true data and 50,000 generated samples using the DDIM sampler \citep{song2022denoisingdiffusionimplicitmodels} with 100 network function evaluations (NFEs).

\subsection{Warming-up with one particle}
\label{app:warm_start}
For the image experiments, we use a warm-started approach \citep{kuntz23a} for faster training. Namely, we initiate \gls*{IPLD} with a single particle $\{z_{m}\}_{m \in [M]}$ and run \Cref{algo:train_1} for 200 epochs on SVHN and CIFAR-10 (100 on CelebA64) before switching to $N>1$ particles by replicating the each particle $\{z_{m}\}_{m \in [M]}$ for $N$ times. We point out that due to the noise added in Langevin dynamics, the particles do not collapse to a single point.

\subsection{Computational Resources}
\label{app:computational_considerations}
For all experiments, we use NVIDIA GPUs. We use a single RTX 3090 for all synthetic experiments. For image experiments, the 10-particle version of \gls*{IPLD} training runs on CIFAR-10 and SVHN datasets were performed on two RTX A6000 GPUs or a single A100 GPU. The remaining image experiments were performed on a single L40S GPU. Our longest experiment takes about 27 hours on a single A100 GPU, which is around 2.5 GPU days measured in V100 using the conversion rule in \citet{rombachhighresolutionimagesynthesis2022}.

\section{Proofs of Theoretical Results}
\label{app:proofs}

\paragraph{Notation and assumptions.} We denote ${\rho}_{\theta,\phi}(\cdot)$ as the unnormalized density for:
\begin{equation}
    {p}_{\theta,\phi}(x,\cdot)=p_{\phi}(x|\cdot)p_\theta(\cdot).
\end{equation}
Similarly, we define $\tilde{\rho}_{\theta,\phi}(\cdot)$ as the unnormalized density for:
\begin{equation}
    \tilde{p}_{\theta,\phi}(x,\cdot):=p_{\theta,\phi}(x,\cdot)\exp(-\mathcal{R}(\theta,\cdot)),
\end{equation}
where $p_{\theta,\phi}(x,z_0)=p_{\theta}(z_0) p_\phi(x|z_0)$ and $\mathcal{R}(\theta, z_0) := \KL(q(z_{1:K}|z_{0})||p_{\theta}(z_{1:K}|z_{0}))$. In multi-datapoint settings, we add a superscript to denote the dependence on $x^m$ i.e. 
\begin{equation}
    \tilde\rho^m_{\theta,\phi}(\cdot):=\tilde p_{\theta,\phi}(x^m,\cdot).
\end{equation}
The $M$-datapoint version is defined as:
\begin{equation}
    \tilde{\rho}_{\theta,\phi}(z^{1:M}):=\prod_{m=1}^M\rho^m_{\theta,\phi}(z^m)=\prod_{m=1}^M \tilde{p}_{\theta,\phi}(x^m,z^m).
\end{equation}
And the joint log density is:
\begin{equation}
    \ell^m(\theta,\phi,z):=\log \rho^m_{\theta,\phi}(z)=\log p_{\phi}(x^m|z)+\log p_\theta(z).
\end{equation}
We also denote $\pi_{\theta,\phi}^m(\cdot):={p}_{\theta,\phi}(\cdot|x^m)$ as the normalized density and define similarly $\tilde \pi_{\theta,\phi}^m(\cdot):={\tilde p}_{\theta,\phi}(\cdot|x^m)$ for the tilted model.
We further denote the normalizing constant of ${p}_{\theta,\phi}(x,\cdot)$ as $A_{\theta,\phi}$ and that of $\tilde{p}_{\theta,\phi}(x,\cdot)$ as $\tilde{A}_{\theta,\phi}$. 
We recall the definition of the free energy $F(\theta, \phi,q)$ as:
\begin{equation}
    {F}(\theta, \phi, q) := \mathbb{E}_{q(z_0)}\left[
   \log \frac{q(z_{0})}{p_{\theta, \phi}( x, z_{0})}
   \right].
\end{equation}
We set the modified free energy as:
\begin{equation}
   \tilde{F}(\theta, \phi, q) := \mathbb{E}_{q(z_{0:K})}\left[ \log \frac{q(z_{0:K})}{p_{\theta, \phi}( x, z_{0:K})} \right]=F(\theta, \phi,q) + \mathbb{E}_{q(z_0)}\left[\mathcal{R}(\theta, z_0) 
   \right].
\end{equation}

The tilted free energy aggregated over multiple data points $\{x^m\}_{m=1}^M$ is defined as:
\begin{equation}
    \tilde{F}(\theta, \phi, q^{1:M}) := \frac{1}{M}\sum_{m=1}^M \tilde{F}(\theta, \phi, q^m).
\end{equation}

For clarity, we drop the subscript on $z_0$ and instead write $z$ for the latent variable in subsequent sections and we use the notations $\Theta:=\R^{d_\theta}$, $\Phi:=\R^{d_\phi}$ for the parameter spaces and $\mathcal{P}_2(\sZ)$ for the space of probability measures over the latent space $\sZ=\R^{d_z}$, which we assume to have densities with respect to the Lebesgue measure. We define the product manifold and the distance $\mathsf{d}_{\mathcal{M}}$ as in \Cref{app:metric}:
\begin{align}
    \mathcal{M} := \ \Theta \times \Phi \times \mathcal{P}_2(\sZ), \quad 
    \mathsf{d}_{\mathcal{M}}((\theta, \phi, q), (\theta', \phi', q')) := \sqrt{
    \|(\theta,\phi)-(\theta',\phi')\|^2 + \mathsf{d}_{W_2}(q, q')^2.
    }\label{eq:single_datapoint_prodmetric}
\end{align}
When $(\theta, \phi, q)$ are random variables, we overload the notation $\mathsf{d}$ to denote:
\begin{align}
\mathsf{d}((\theta, \phi, q), (\theta', \phi', q')) := \sqrt{
    \mathbb{E}[\|(\theta,\phi)-(\theta',\phi')\|^2] + \mathbb{E}[\mathsf{d}_{W_2}(q, q')^2].
    }\label{eq:random_variable_metric}
\end{align}
To extend the above to multiple datapoints, we set:
\begin{align}
    \mathcal{M}^{1:M}&:= \ \Theta \times \Phi \times \mathcal{P}_2(\sZ)^M, \\
    \mathsf{d}_{\mathcal{M}^{1:M}}((\theta, \phi, q^{1:M}), (\theta', \phi', q'^{1:M})) &:= \sqrt{
    \|(\theta,\phi)-(\theta',\phi')\|^2 + \frac{1}{M}\sum_{m=1}^M \mathsf{d}_{W_2}(q^m, q'^m)^2.\label{eq:multi_datapoint_prodmetric}
    }
\end{align}
We additionally denote $\mathcal{P}_2^1(\sZ)$ as the subset of $\mathcal{P}_2(\sZ)$ with densities differentiable almost everywhere w.r.t.~the Lebesgue measure. We use a superscript $1$ for related product spaces (\emph{e.g.}, $\mathcal{M}^1$ and $\mathcal{M}^{1:M,1}$) to indicate the restriction to $\mathcal{P}_2^1(\sZ)$.

We now restate the full assumptions for the theoretical results in the main text. 
\modelregularityrestatable*
\solutionregularityrestatable*
\stronglogconcavityrestatable*

\subsection{Full Statement and Proof of Theorem~\ref{thm:exp_convg_informal}}\label{sec_app:exp_convg_proof}
We now provide the full statement and proof of Theorem~\ref{thm:exp_convg_informal}. The strategy will be the same as \citet{caprioerrorboundsparticle2024}, where we first establish the extended log-Sobolev inequality under strong log-concavity and subsequently the extended Talagrand inequality. Our main difference from \citet{caprioerrorboundsparticle2024} is that 1) we work with the tilted model $\tilde{p}_{\theta,\phi}(x,z)$, which has an additional component arising from the diffusion loss (cf. Lemma~\ref{lem:lower_semi_continuous}), and 2) we consider the multi-datapoint setting, which leads to a different definition of the manifold and the distance thereon.

\subsubsection{Extended Log-Sobolev Inequality}
In this section, we show that strong log-concavity implies extended log-Sobolev inequality. We first define the $M$-datapoint version of the extended log-Sobolev inequality in \citet{caprioerrorboundsparticle2024}:

\begin{definition}[\gls*{xlsi}]\label{def:xLSI}
    Denote $\tilde{F}_\star:=  \inf_{(\theta, \phi,q^{1:M})\in \mathcal{M}^{1:M,1}} \tilde{F}(\theta, \phi, q^{1:M})$ as the optimum of $\tilde{F}(\theta, \phi, q^{1:M})$. We say the measure $(\tilde{\rho}_{\theta,\phi}(\dd z^{1:M}))_{(\theta,\phi)\in\Theta\times\Phi}$ satisfies the extended log-Sobolev inequality with constant $\lambda>0$ if for all $(\theta, \phi,q^{1:M}) \in \ \mathcal{M}^{1:M,1}$ we have:
    \begin{equation}
        2\lambda[\tilde{F}(\theta, \phi, q^{1:M})-\tilde{F}_\star] \leq I(\theta, \phi, q^{1:M}),
    \end{equation}
    where we define $I(\theta, \phi, q^{1:M})$ as:
    \begin{equation}\label{eq:def_fisher_info}
        \begin{aligned}
            I(\theta, \phi, q^{1:M})
            &:= \|\nabla_{(\theta, \phi)} \tilde{F}(\theta, \phi, q^{1:M})\|^2 + \frac{1}{M}\sum_{m=1}^M \int \left\| \nabla_{z} \log \left( \frac{q^{m}(z)}{\tilde{p}_{\theta,\phi}(x^m,z)}\right)  \right\|^2 q^{m}(\dd z) \\
            &= \left\|
            \frac{1}{M}\sum_{m=1}^M \int \nabla_{(\theta, \phi)} \log \tilde{p}_{\theta, \phi}(x^m,z)\, q^{m}(\dd z)
            \right\|^2 + \frac{1}{M}\sum_{m=1}^M \int \left\| \nabla_{z} \log \left( \frac{q^{m}(z)}{\tilde{p}_{\theta,\phi}(x^m,z)}\right)  \right\|^2 q^{m}(\dd z).
        \end{aligned}
    \end{equation}
\end{definition}

Using the functional $I(\theta, \phi, q^{1:M})$ defined in~\eqref{eq:def_fisher_info}, we can state the following extension of de Bruijn's identity:
\begin{proposition}[de Bruijn's Identity]\label{prop:de_bruijin}
    Under Assumption~\Cref{assump:solution_reg}, we have:
    \begin{equation}\label{eq:de_bruijin}
        \frac{\dd}{\dd t} \tilde{F}(\theta_t, \phi_t, q_t^{1:M}) = -I(\theta_t, \phi_t, q_t^{1:M}), \qquad \forall t>0.
    \end{equation}
\end{proposition}
\begin{proof}
    By definition,
    \begin{align*}
        \tilde{F}(\theta_t,\phi_t,q_t^{1:M})
        = \frac{1}{M}\sum_{m=1}^M \left[
            \int \log\left(\frac{q_t^m(z)}{\tilde{p}_{\theta_t,\phi_t}(x^m,z)}\right) q_t^m(\dd z)
        \right].
    \end{align*}
    Under Assumption~\Cref{assump:solution_reg}, we can differentiate under the integral sign and the flow satisfies
    \begin{align*}
        \frac{\partial_t q_t^m(z)}{\partial t} &= \nabla_z \cdot \left[
            q_t^m(z)\nabla_z \log \left(\frac{q_t^m(z)}{\tilde{p}_{\theta_t,\phi_t}(x^m,z)}\right)
        \right], \\
        (\dot{\theta}_t,\dot{\phi}_t) &= \frac{1}{M}\sum_{m=1}^M \int \nabla_{(\theta,\phi)} \log \tilde{p}_{\theta_t,\phi_t}(x^m,z)\, q_t^m(\dd z).
    \end{align*}
    Using integration by parts, for each $m$ we obtain
    \begin{align*}
        \frac{\dd}{\dd t} \int \log(q_t^m(z)) q_t^m(\dd z)
        &= \int (\log(q_t^m(z))+1)\, \frac{\partial_t q_t^m(z)}{\partial t}\dd z \\
        &= - \int \left\langle \nabla_z \log(q_t^m(z)), \nabla_z \log \left(\frac{q_t^m(z)}{\tilde{p}_{\theta_t,\phi_t}(x^m,z)}\right) \right\rangle q_t^m(\dd z)
    \end{align*}
    and
    \begin{align*}
        \frac{\dd}{\dd t} \int \log(\tilde{p}_{\theta_t,\phi_t}(x^m,z)) q_t^m(\dd z)
        &= \int \left\langle \nabla_{(\theta,\phi)} \log \tilde{p}_{\theta_t,\phi_t}(x^m,z), (\dot{\theta}_t,\dot{\phi}_t) \right\rangle q_t^m(\dd z) \\
        &\quad - \int \left\langle \nabla_z \log(\tilde{p}_{\theta_t,\phi_t}(x^m,z)), \nabla_z \log \left(\frac{q_t^m(z)}{\tilde{p}_{\theta_t,\phi_t}(x^m,z)}\right) \right\rangle q_t^m(\dd z).
    \end{align*}
    Therefore,
    \begin{align*}
        \frac{\dd}{\dd t} \tilde{F}(\theta_t,\phi_t,q_t^{1:M})
        &= - \|(\dot{\theta}_t,\dot{\phi}_t)\|^2 - \frac{1}{M}\sum_{m=1}^M \int \left\| \nabla_z \log \left(\frac{q_t^m(z)}{\tilde{p}_{\theta_t,\phi_t}(x^m,z)}\right) \right\|^2 q_t^m(\dd z) \\
        &= - I(\theta_t,\phi_t,q_t^{1:M}),
    \end{align*}
    where the last equality follows from the definition of $I$ in~\eqref{eq:def_fisher_info}.
\end{proof}

We need a few auxiliary results that are extensions of those in \citet{caprioerrorboundsparticle2024}.
\begin{lemma}[Geodesics on $\mathcal{M}^{1:M}$]\label{lem:geodesic}
    A curve $\gamma(t) : t \in [0, 1] \mapsto \mathcal{M}^{1:M}$ is a geodesic if and only if $\gamma(t) = (\gamma_{\theta}(t), \gamma_\phi (t), \gamma_{q^1}(t), \ldots, \gamma_{q^M}(t))$, where $\gamma_{\theta}$ and $\gamma_\phi$ are geodesics in the Euclidean parameter space and each $\gamma_{q^m}$ is a geodesic in $(\mathcal{P}_2(\sZ), \mathrm{d}_{W_2})$. In particular, if $\gamma(t)$ is a geodesic in $\mathcal{M}^{1:M}$ connecting $(\theta, \phi, q^{1:M})$ and $(\theta', \phi', (q')^{1:M})$ then 
    \begin{align*}
        \gamma_{\theta}(t) &= (1-t)\theta + t\theta', \\
        \gamma_{\phi}(t) &= (1-t)\phi + t\phi', \\
        \gamma_{q^m}(t) &= (h_t)_{\#}\varrho^m, \quad m=1,\ldots,M,
    \end{align*}
    where $\varrho^m$'s are Wasserstein-2 optimal transport plans for $(q^m, (q')^m)$ and $h_t(z,z') = (1-t)z + tz'$. Furthermore, if $q^m$'s have densities with respect to the Lebesgue measure, then we can also write:
    \begin{align*}
        \gamma_{q^m}(t) = ((1-t)\mathrm{id} + t\nabla_z f^m)_{\#} q^m, \quad \forall m \in [M],
    \end{align*}
    for some convex function $f^m$.
\end{lemma}
\begin{proof}
    The first claim follows from the definition of the product metric~\eqref{eq:multi_datapoint_prodmetric}. The second claim is a result of the characterization of geodesics (cf.~\citet[Theorem 5.27]{santambrogio2015optimal}). The last claim follows from Brenier's theorem (cf.~\citet[Theorem 1.17]{santambrogio2015optimal}).
\end{proof}

\begin{lemma}[Geodesic convexity of $\tilde{F}$]\label{lem:geodesic_convexity}
    Under Assumption~\Cref{assump:strong_log_concave}, the tilted free energy $\tilde{F}(\theta, \phi, q^{1:M})$ is $\lambda$-geodesically convex on $\mathcal{M}^{1:M}$, that is, for any pair $(\theta, \phi, q^{1:M})$ and $(\theta', \phi', (q')^{1:M})$ in $\mathcal{M}^{1:M}$ and any geodesic $\gamma(t)$ connecting them, we have:
    \begin{align*}
        \tilde{F}(\gamma(t)) &\leq (1-t)\tilde{F}(\theta, \phi, q^{1:M}) + t\tilde{F}(\theta', \phi', (q')^{1:M}) 
        - \frac{\lambda t(1-t)}{2}\mathsf{d}_{\mathcal{M}^{1:M}}((\theta, \phi, q^{1:M}), (\theta', \phi', (q')^{1:M}))^2.
    \end{align*}
\end{lemma}
\begin{proof}
    First recall that $\tilde{F}(\theta, \phi, q^{1:M})= M^{-1} \sum_{m=1}^M \mathbb{E}_{q^m(z)}[\log(q^m(z))-\tilde{\ell}^m(\theta,\phi,z)]$.
    We note that the negative entropy $q^m \mapsto \int \log(q^m(z)) q^m(\dd z)$ is geodesically convex on $\mathcal{P}_2(\sZ)$ (cf.~\citet[Theorem 7.28]{santambrogio2015optimal}) and thus the average $q^{1:M} \mapsto M^{-1}\sum_{m=1}^M \int \log(q^m(z)) q^m(\dd z)$ is geodesically convex on $\mathcal{P}_2(\sZ)^M$. By an argument similar to~\citet[Lemma 21]{caprioerrorboundsparticle2024}, we can show the map $V: (\theta, \phi, z) \mapsto -M^{-1}\sum_{m=1}^M \int \tilde{\ell}^m(\theta,\phi, z) q^m(\dd z)$ is $\lambda$-strongly convex along the geodesic $\gamma(t)$:
    \begin{align*}
        V(\gamma(t)) &= - \int \frac{1}{M}\sum_{m=1}^{M} \tilde{\ell}^m(\gamma_\theta(t), \gamma_\phi(t), z) \gamma_{q^m}(t)(\dd z) \\
        &=-\int \frac{1}{M}\sum_{m=1}^{M} \tilde{\ell}^m((1-t)\theta + t\theta', (1-t)\phi + t\phi', (1-t)z + tz') \varrho^m(\dd z, \dd z') \\
        &\leq -\int \frac{1}{M}\sum_{m=1}^{M} (1-t) \tilde{\ell}^m(\theta, \phi, z) + t\tilde{\ell}^m(\theta', \phi', z') \varrho^m(\dd z, \dd z') \\
        &\quad - \frac{\lambda t(1-t)}{2} \int \frac{1}{M}\sum_{m=1}^M \|(\theta, \phi, z) - (\theta', \phi', z')\|^2 \varrho^m(\dd z, \dd z') \\
        &=(1-t)V(\theta, \phi, q^{1:M}) + tV(\theta', \phi', (q')^{1:M}) - \frac{\lambda t(1-t)}{2} \mathsf{d}_{\mathcal{M}^{1:M}}((\theta, \phi, q^{1:M}), (\theta', \phi', (q')^{1:M}))^2,
    \end{align*}
    where we have used Lemma~\ref{lem:geodesic} in the second equality and the strong log-concavity in the inequality. The conclusion follows from the definition of the product metric~\eqref{eq:multi_datapoint_prodmetric}.
\end{proof}
We similarly provide an extension to~\citet[Lemma~22]{caprioerrorboundsparticle2024}:
\begin{lemma}\label{lem:lower_deriv}
    Let $\gamma(t)$ be a geodesic in $\mathcal{M}^{1:M,1}$ connecting $(\theta, \phi, q^{1:M})$ and $(\theta', \phi', (q')^{1:M})$. Then we have:
    \begin{align*}
        \liminf_{t \to 0^+} \frac{\tilde{F}(\gamma(t)) - \tilde{F}(\gamma(0))}{t} &\geq \frac{1}{M} \sum_{m=1}^M \left\langle \dot{\gamma}_{q^m}(0), \nabla_{q^m} \tilde{F}(\theta, \phi, q^{m}) \right\rangle_{q^m} + \langle 
        (\theta', \phi') - (\theta, \phi), \nabla_{(\theta, \phi)} \tilde{F}(\theta, \phi, q^{1:M})
        \rangle
    \end{align*}
\end{lemma}
\begin{proof}
    The proof is an adaptation of~\citet[Lemma 22]{caprioerrorboundsparticle2024} by noting that the computation can be done separately for each $m$.
\end{proof}
\begin{theorem}[Strong log-concavity $\implies$ xLSI]\label{thm:sLC_implies_xLSI}
    Suppose Assumptions~\Cref{assump:model_reg} and~\Cref{assump:strong_log_concave} hold, then the family of measures $(\tilde{\rho}_{\theta,\phi}(\dd z^{1:M}))_{(\theta,\phi)\in\Theta\times\Phi}$ satisfies the extended log-Sobolev inequality with constant $\lambda>0$.
\end{theorem}
\begin{proof}
    The proof extends that of~\citet[Theorem 6]{caprioerrorboundsparticle2024} to the product manifold $\Theta \times \Phi \times \mathcal{P}_2(\sZ)^M$. Let $\gamma(t)$ be a geodesic in $\mathcal{M}^{1:M}$ connecting $(\theta, \phi, q^{1:M})$ and $(\theta', \phi', (q')^{1:M})$. By Lemma~\ref{lem:geodesic_convexity}, we have:
    \begin{align*}
        \liminf_{t \to 0^+} \frac{\tilde{F}(\gamma(t)) - \tilde{F}(\gamma(0))}{t} &\leq \tilde{F}(\theta', \phi', (q')^{1:M}) - \tilde{F}(\theta, \phi, q^{1:M}) - \frac{\lambda}{2}\mathsf{d}_{\mathcal{M}^{1:M}}((\theta, \phi, q^{1:M}), (\theta', \phi', (q')^{1:M}))^2.
    \end{align*}
    Setting $(\theta', \phi', (q')^{1:M})=(\theta_\star, \phi_\star, (q_\star')^{1:M})$ to be a minimizer of $\tilde{F}$ and using the previous Lemma~\ref{lem:lower_deriv}, we obtain:
    \begin{align*}
        \tilde{F}(\theta, \phi, q^{1:M}) - \tilde{F}_\star &\leq -\frac{1}{M} \sum_{m=1}^M \left\langle \dot{\gamma}_{q^m}(0), \nabla_q { \tilde{F}(\theta, \phi, q^m)} \right\rangle_{q^m} - \langle 
        (\theta_\star, \phi_\star) - (\theta, \phi), \nabla_{(\theta, \phi)} \tilde{F}(\theta, \phi, q^{1:M})
        \rangle \\
        &\quad - \frac{\lambda}{2}\mathsf{d}_{\mathcal{M}^{1:M}}((\theta, \phi, q^{1:M}), (\theta_\star, \phi_\star, q_\star^{1:M}))^2 \\
        &\leq \frac{1}{M} \sum_{m=1}^M \mathsf{d}_{W_2}(q^m, q_\star^m) \left\| \nabla_q {\tilde{F}(\theta, \phi, q^m)} \right\|_{q^m} + \|(\theta_\star, \phi_\star) - (\theta, \phi)\| \|\nabla_{(\theta, \phi)} \tilde{F}(\theta, \phi, q^{1:M})\| \\
        &\quad - \frac{\lambda}{2}\mathsf{d}_{\mathcal{M}^{1:M}}((\theta, \phi, q^{1:M}), (\theta_\star, \phi_\star, q_\star^{1:M}))^2,
    \end{align*}
    where we have used the definition of the inner product on the tangent space $T_q\mathcal{P}_2^1(\sZ)$ and Lemma~\ref{lem:geodesic}:
    \begin{equation*}
        \|\dot{\gamma}_{q^m}(0)\|_{q^m}=\|\nabla_z f^m - \mathrm{id}\|_{L^2(q^m)}=\mathsf{d}_{W_2}(q^m, q_\star^m),
    \end{equation*}
    and the Cauchy-Schwarz inequality. One more application of the Cauchy-Schwarz inequality yields:
    \begin{align*}
        \tilde{F}(\theta, \phi, q^{1:M}) - \tilde{F}_\star &\leq \left[\frac{1}{M}\sum_{m=1}^M \mathsf{d}_{W_2}(q^m, q_\star^m)^2 + \|(\theta, \phi) - (\theta_\star, \phi_\star)\|^2\right]^{1/2} \\
        &\quad \times \left[\frac{1}{M}\sum_{m=1}^M \left\| \nabla_{q} { \tilde{F}(\theta, \phi, q^m)} \right\|_{q^m}^2 + \|\nabla_{(\theta, \phi)} \tilde{F}(\theta, \phi, q^{1:M})\|^2\right]^{1/2} \\
        &\quad - \frac{\lambda}{2}\mathsf{d}_{\mathcal{M}^{1:M}}((\theta, \phi, q^{1:M}), (\theta_\star, \phi_\star, q_\star^{1:M}))^2 \\
        &= {\mathsf{d}_{\mathcal{M}^{1:M}}((\theta, \phi, q^{1:M}), (\theta_\star, \phi_\star, q_\star^{1:M}))} \sqrt{I(\theta, \phi, q^{1:M})} - \frac{\lambda}{2}\mathsf{d}_{\mathcal{M}^{1:M}}((\theta, \phi, q^{1:M}), (\theta_\star, \phi_\star, q_\star^{1:M}))^2 \\
        &\leq \frac{1}{2\lambda} I(\theta, \phi, q^{1:M}),
    \end{align*}
    where we have upper bounded the first term using Young's inequality $ab \leq a^2/(2\lambda) + \lambda b^2/2$ in the last step.
\end{proof}

\subsubsection{Extended Talagrand-type Inequality}
We now show that \gls*{xlsi} implies an extended Talagrand-type inequality. Throughout this section, we assume Assumptions~\Cref{assump:model_reg} and \Cref{assump:solution_reg} hold. We first provide the $M$-datapoint version of extended Talagrand in \citet{caprioerrorboundsparticle2024}.
\begin{definition}[Extended Talagrand-type Inequality]\label{defn:xTalagrand}
    The family of measures $(\tilde{\rho}_{\theta,\phi}(\dd z^{1:M}))_{(\theta,\phi)\in\Theta\times\Phi}$ satisfy the extended Talagrand-type inequality with constant $\lambda>0$ if for all $(\theta, \phi,q^{1:M}) \in \ \mathcal{M}^{1:M}$:
    \begin{align}
        2[\tilde{F}(\theta, \phi, q^{1:M})-\tilde{F}_\star] &\geq \lambda\inf_{(\theta, \phi,q^{1:M}) \in \ \mathcal{M}^{1:M}} \mathsf{d}((\theta, \phi, q^{1:M}), \mathcal{M}^{1:M}_\star)^2
    \end{align}
    where $\mathcal{M}^{1:M}_\star:=\arg\min_{(\theta, \phi,q^{1:M}) \in \ \mathcal{M}^{1:M}} \tilde{F}(\theta, \phi, q^{1:M})$ is the optimal set of $\tilde{F}$.
\end{definition}

 The proof hinges on a few auxiliary results adapted from \citet{caprioerrorboundsparticle2024}.
\begin{lemma}\label{lem:distance_deriv}
    Under Assumption~\Cref{assump:solution_reg}, we have:
    \begin{equation*}
        \frac{\dd}{\dd t} \mathsf{d}_{\mathcal{M}^{1:M}}((\theta_t, \phi_t, q_t^{1:M}), (\theta, \phi, q^{1:M})) \leq \sqrt{I(\theta_t, \phi_t, q_t^{1:M})}, \quad \forall t>0,
    \end{equation*}
    where $I(\theta, \phi, q^{1:M})$ is defined in~\eqref{eq:def_fisher_info}.
\end{lemma}
\begin{proof}
    For $(\theta, \phi, q^{1:M}) \in \mathcal{M}^{1:M}$, we define the velocity field $v_t^{1:M} = (v_t^1, \ldots, v_t^M)$ as:
    \begin{align*}
        v_t^m(z) = -\nabla_z \log \left( \frac{q_t^m(z)}{\tilde{\rho}_{\theta_t, \phi_t}^m(z)} \right), \quad \forall m \in [M].
    \end{align*}
    Using the proof to~\citet[Lemma 16]{caprioerrorboundsparticle2024}, for each $m$, we have:
    \begin{align*}
        \frac{\dd}{\dd t} \mathsf{d}_{W_2}(q_t^m, q^m)^2 \leq 2 \mathsf{d}_{W_2}(q_t^m, q^m)\sqrt{\int \|v_t^m(z)\|^2 q_t^m(\dd z)}.
    \end{align*}
    By the Cauchy-Schwarz inequality, we have:
    \begin{align*}
        \frac{\dd}{\dd t} \frac{1}{M} \sum_{m=1}^M \mathsf{d}_{W_2}(q_t^m, q^m)^2 &\leq \frac{2}{M} \sum_{m=1}^M \mathsf{d}_{W_2}(q_t^m, q^m)\sqrt{\int \|v_t^m(z)\|^2 q_t^m(\dd z)}.
    \end{align*}
    Combining this with the definition of the Euclidean counterpart:
    \begin{align*}
        \frac{\dd}{\dd t} \|(\theta_t, \phi_t) - (\theta, \phi)\|^2 = 2\left\langle \frac{\dd}{\dd t} (\theta_t, \phi_t), (\theta_t, \phi_t) - (\theta, \phi) \right\rangle \leq 2\left\|\nabla_{(\theta, \phi)} \tilde{F}(\theta_t, \phi_t, q_t^{1:M})\right\| \|(\theta_t, \phi_t) - (\theta, \phi)\|,
    \end{align*}
    we obtain with an application of the Cauchy-Schwarz inequality similar to~\Cref{thm:sLC_implies_xLSI}:
    \begin{align*}
        \frac{\dd}{\dd t}\frac{1}{2} \mathsf{d}_{\mathcal{M}^{1:M}}((\theta_t, \phi_t, q_t^{1:M}), (\theta, \phi, q^{1:M}))^2 &\leq  \left(\frac{1}{M}\sum_{m=1}^M \mathsf{d}_{W_2}(q_t^m, q^m)^2\right)^{1/2}\left(\frac{1}{M}\sum_{m=1}^M \int \|v_t^m(z)\|^2 q_t^m(\dd z)\right)^{1/2} \\
        &\qquad + \|(\theta_t, \phi_t) - (\theta, \phi)\| \|\nabla_{(\theta, \phi)} \tilde{F}(\theta_t, \phi_t, q_t^{1:M})\| \\
        &\leq \sqrt{I(\theta_t, \phi_t, q_t^{1:M})} \mathsf{d}_{\mathcal{M}^{1:M}}((\theta_t, \phi_t, q_t^{1:M}), (\theta, \phi, q^{1:M})),
    \end{align*}
    which implies the conclusion.
\end{proof}

\begin{lemma}\label{lem:lower_semi_continuous}
    The tilted free energy $\tilde{F}(\theta, \phi, q^{1:M})$ is lower semi-continuous on $\mathcal{M}^{1:M,1}$.
\end{lemma}
\begin{proof}
    We can re-write $\tilde{F}(\theta, \phi, q^{1:M}) = M^{-1}\sum_{m=1}^M {F}(\theta, \phi, q^m)+\mathbb{E}_{q^m}[\mathcal{R}(\theta, z)]$, where we recall $\mathcal{R}(\theta, z_0) = \KL\big(q(z_{1:K}|z_{0}) \|p_{\theta}(z_{1:K}|z_{0})\big) \geq 0$.
    We note that it suffices to show that $U: (\theta, q) \mapsto \mathbb{E}_{q}[\mathcal{R}(\theta, z)] = \int \mathcal{R}(\theta, z) q(\dd z)$ is lower semi-continuous on $\Theta \times \mathcal{P}_2(\sZ)$ (hence $\Theta \times \mathcal{P}_2^1(\sZ)$), since~\citet[Lemma 18]{caprioerrorboundsparticle2024} shows that the untilted free energy ${F}(\theta, \phi, q)$ is lower semi-continuous on $\mathcal{M}^1$. The conclusion thus follows from the definition of the space $(\mathcal{M}^{1:M,1}, \mathsf{d}_{\mathcal{M}^{1:M}})$.

    To show $U$ is lower semi-continuous, let $\{(\theta_n, q_n)\}_{n\in\mathbb{N}}$ be a sequence in $\Theta \times \mathcal{P}_2(\sZ)$ converging to some $(\theta_\infty, q_\infty) \in \Theta \times \mathcal{P}_2(\sZ)$ in the topology induced by $\mathsf{d}_{\mathcal{M}}$. The convergence in $\mathsf{d}_{W_2}$ implies weak convergence of $q_n \rightharpoonup q_\infty$ \citep{figalli2021invitation} and we have $\theta_n \to \theta_\infty$.
    Since $\mathcal{R}$ is non-negative and continuous in both arguments, by applying Portmanteau theorem on the product measures $\mu_n:=\delta_{\theta_n}(\dd \theta) \otimes q_n(\dd z)$,
    we have:
    \begin{align*}
        \liminf_{n\to\infty} U(\theta_n, q_n) &= \liminf_{n\to\infty} \int \mathcal{R}(\theta_n, z) \mu_n(\dd \theta, \dd z)\\
        &\geq\int \mathcal{R}(\theta, z) \mu_\infty(\dd \theta, \dd z) = U(\theta_\infty, q_\infty),
    \end{align*}
    which concludes the proof.
\end{proof}

\begin{lemma}\label{lem:cauchy}
    Denote $\mathcal{M}_\star := \arg\min_{(\theta, \phi, q^{1:M}) \in \mathcal{M}^{1:M}} \tilde{F}(\theta, \phi, q^{1:M})$ as the set of minimizers of $\tilde{F}$ and $\tilde{F}_\star := \inf_{(\theta, \phi, q^{1:M}) \in \mathcal{M}^{1:M}} \tilde{F}(\theta, \phi, q^{1:M})$ as the optimal value. Under the extended log-Sobolev inequality, for a Cauchy sequence $\{(\theta_n, \phi_n, q_n^{1:M})\}_{n\in \mathbb{N}}$ in $\mathcal{M}^{1:M,1}$, there exists an increasing sequence of $t_n \to +\infty$ such that:
    \begin{equation*}
        (\theta_{t_n}, \phi_{t_n}, q_{t_n}^{1:M}) \to (\theta_\star, \phi_\star, q_\star^{1:M}) \in \mathcal{M}_\star, \quad \text{as} \ n \to +\infty,
    \end{equation*}
    for some $(\theta_\star, \phi_\star, q_\star^{1:M}) \in \mathcal{M}_\star$ in the topology induced by $\mathsf{d}_{\mathcal{M}^{1:M}}$ on $\mathcal{M}^{1:M}$.
\end{lemma}
\begin{proof}
    Since both the Euclidean space and $\mathcal{P}_2(\sZ)$ are complete metric spaces \citep[Theorem 6.18]{villani2008optimal}, the product space $(\mathcal{M}^{1:M}, \mathsf{d}_{\mathcal{M}^{1:M}})$ is also complete. Therefore, the sequence $(\theta_\infty, \phi_\infty, q_\infty^{1:M})$ is also Cauchy in $\mathcal{M}^{1:M}$ and converges to some limit $(\theta_\infty, \phi_\infty, q_\infty^{1:M}) \in \mathcal{M}^{1:M}$. By the lower semi-continuity of $\tilde{F}$, we have:
    \begin{align*}
        \tilde{F}_\star \leq \tilde{F}(\theta_\infty, \phi_\infty, q_\infty^{1:M}) \leq \liminf_{n\to\infty} \tilde{F}(\theta_{t_n}, \phi_{t_n}, q_{t_n}^{1:M}) = \tilde{F}_\star,
    \end{align*}
    where the first inequality is by the optimality of $\tilde{F}_\star$ and the equality follows from the fact the exponential convergence induced by \gls*{xlsi} (cf. the rightmost inequality of~\Cref{thm:exp_convg_appendix} below).
\end{proof}

\begin{theorem}[\gls*{xlsi} $\implies$ \gls*{xt2i}]\label{thm:xLSI_implies_xTalagrand}
    Suppose the family of measures $(\tilde{\rho}_{\theta,\phi}(\dd z^{1:M}))_{(\theta,\phi)\in\Theta\times\Phi}$ satisfies the extended log-Sobolev inequality with constant $\lambda>0$. Then it also satisfies the extended Talagrand-type inequality with the same constant $\lambda>0$.
\end{theorem}
\begin{proof}
    Using Lemma~\ref{lem:distance_deriv} and the \gls*{xlsi}, an argument similar to that in~\citet[Theorem 4]{caprioerrorboundsparticle2024} yields the following sequence of inequalities for any $(\theta, \phi, q^{1:M}) \in \mathcal{M}^{1:M}$:
    \begin{align*}
        \frac{\dd}{\dd t} \mathsf{d}_{\mathcal{M}^{1:M}}((\theta_t, \phi_t, q_t^{1:M}), (\theta, \phi, q^{1:M})) \leq \sqrt{I(\theta_t, \phi_t, q_t^{1:M})} &\leq \frac{I(\theta_t, \phi_t, q_t^{1:M})}{\sqrt{2\lambda[\tilde{F}(\theta_t, \phi_t, q_t^{1:M}) - \tilde{F}_\star]}}. 
    \end{align*}
    Using de Bruijn's identity~\eqref{eq:de_bruijin}, we have:
    \begin{align*}
        \frac{\dd}{\dd t} \mathsf{d}_{\mathcal{M}^{1:M}}((\theta_t, \phi_t, q_t^{1:M}), (\theta, \phi, q^{1:M})) &\leq -\frac{\dd}{\dd t} \sqrt{\frac{2[\tilde{F}(\theta_t, \phi_t, q_t^{1:M}) - \tilde{F}_\star]}{\lambda}}.
    \end{align*}
    For any interval $(t,t')$, integrating yields:
    \begin{align}
        &\mathsf{d}_{\mathcal{M}^{1:M}}((\theta_{t'}, \phi_{t'}, q_{t'}^{1:M}), (\theta, \phi, q^{1:M})) - \mathsf{d}_{\mathcal{M}^{1:M}}((\theta_t, \phi_t, q_t^{1:M}), (\theta, \phi, q^{1:M})) \nonumber \\
        &\leq \sqrt{\frac{2[\tilde{F}(\theta_t, \phi_t, q_t^{1:M}) - \tilde{F}_\star]}{\lambda}} - \sqrt{\frac{2[\tilde{F}(\theta_{t'}, \phi_{t'}, q_{t'}^{1:M}) - \tilde{F}_\star]}{\lambda}}.\label{eq:cauchy_seq}
    \end{align}
    We can thus construct a Cauchy sequence $\{(\theta_{t_n}, \phi_{t_n}, q_{t_n}^{1:M})\}_{n\in \mathbb{N}}$ in $\mathcal{M}^{1:M,1}$ from~\eqref{eq:cauchy_seq} for some increasing sequence $t_n \to +\infty$. By the previous Lemma~\ref{lem:cauchy}, we have the limit point $(\theta_\infty, \phi_\infty, q_\infty^{1:M}) \in \mathcal{M}_\star$. Setting $(t,t')=(0,t_n)$ in~\eqref{eq:cauchy_seq} and letting $n \to +\infty$, we have:
    \begin{align*}
        \mathsf{d}_{\mathcal{M}^{1:M}}((\theta_0, \phi_0, q_0^{1:M}), (\theta_\infty, \phi_\infty, q_\infty^{1:M})) \leq \sqrt{\frac{2[\tilde{F}(\theta_0, \phi_0, q_0^{1:M}) - \tilde{F}_\star]}{\lambda}},
    \end{align*}
    whence the conclusion follows by noting the distance function is continuous and the infimum is attained.
\end{proof}

We now state the full-version of \Cref{thm:exp_convg_informal}.

\begin{theorem}\label{thm:exp_convg_appendix}
    Suppose Assumptions~\Cref{assump:model_reg}-\Cref{assump:strong_log_concave} are satisfied. Then, $\tilde{\ell}$ has a unique maximizer $(\theta_\star, \phi_\star)$ and the flow converges exponentially fast to it: for some $\lambda>0$ independent of $M$, 
\begin{equation}\label{eq:app_thm_exp}
        \mathsf{d}_{\mathcal{M}^{1:M}}((\theta_t,\phi_t,q_t^{1:M}),(\theta_\star,\phi_\star,q_{\star}^{1:M}))
        \leq
        \sqrt{\frac{2[\tilde{F}(\theta_t, \phi_t, q_t^{1:M}) - \tilde{F}_\star]}{\lambda}}
        \leq
        \sqrt{\frac{2[\tilde{F}(\theta_0, \phi_0, q_0^{1:M})-\tilde{F}_\star]}{\lambda}}e^{-\lambda t}, \quad \forall t>0;
\end{equation}
where $\tilde{F}_\star:=\inf_{(\theta,\phi,q^{1:M})\in \mathcal{M}^{1:M}}\tilde{F}(\theta,\phi,q^{1:M})$ and $\vert\vert\cdot\vert\vert$ denotes the Euclidean norm.
\end{theorem}

\begin{proof}

Under Assumption~\Cref{assump:strong_log_concave}, each map $(\theta,\phi,z)\mapsto \log \tilde{p}_{\theta,\phi}(x^m,z)$ is $\lambda$-strongly concave. Hence the product density
$$\bar{p}_{\theta,\phi}(x^{1:M},z^{1:M}):=\prod_{m=1}^M\tilde{p}_{\theta,\phi}(x^m,z^m)$$
has joint log-density
$$\bar{\ell}(\theta,\phi,z^{1:M}):=\log \bar{p}_{\theta,\phi}(x^{1:M},z^{1:M})=\sum_{m=1}^M\log\tilde{p}_{\theta,\phi}(x^m,z^m),$$
which is strictly concave in $((\theta,\phi),z^{1:M})$. Applying~\citet[Theorem 4]{kuntz23a} with parameter $u=(\theta,\phi)$ and latent variable $z^{1:M}$ yields that the marginal log-likelihood
$$\bar{\ell}(\theta,\phi):=\log \int \bar{p}_{\theta,\phi}(x^{1:M},z^{1:M}) \dd z^{1:M}=\sum_{m=1}^M\log\tilde{p}_{\theta,\phi}(x^m)$$
has a unique maximizer. Since $\tilde{\ell}(\theta,\phi)=M^{-1}\bar{\ell}(\theta,\phi)$, the same pair $(\theta_\star,\phi_\star)$ is the unique maximizer of $\tilde{\ell}$.

Under the log-concavity assumption, the family of distributions $(\tilde{\rho}_{\theta,\phi}(\dd z^{1:M}))_{(\theta,\phi)\in\Theta\times\Phi}$ satisfies the extended log-Sobolev inequality (cf. Definition~\ref{def:xLSI}) using \Cref{thm:sLC_implies_xLSI}.

The rightmost inequality in~\eqref{eq:app_thm_exp} then follows from a combination of de Bruijn's identity~\eqref{eq:de_bruijin}, the extended log-Sobolev inequality, and Gr\"onwall's lemma. 

The leftmost inequality follows from a combination of the uniquness of minimizers and \Cref{thm:xLSI_implies_xTalagrand}. 
\end{proof}
\begin{remark}\label{remark:star_notation}
    Since $\tilde{\ell}$ has a unique maximizer $(\theta_\star,\phi_\star)$ and Proposition~\ref{prop:ftildemins} identifies the corresponding minimizer of $\tilde{F}$, we can write $\tilde{F}_\star =\tilde{F}(\theta_\star,\phi_\star,\tilde\pi_{\theta_\star,\phi_\star}^{1:M})$ and $q_\star^{1:M}=\tilde\pi_{\theta_\star,\phi_\star}^{1:M}$, where we recall $\tilde\pi_{\theta,\phi}(\cdot)=\tilde{p}_{\theta,\phi}(\cdot|x)$ is the true posterior distribution.
\end{remark}

\subsection{Proof of Theorem~\ref{thrm:error-bound}}\label{sec_app:discretize_proof}
We now provide the proof to Theorem~\ref{thrm:error-bound}. The proof will be based on the spatial and temporal discretization error bounds established in \citet{caprioerrorboundsparticle2024}, which are combined with the exponential convergence result in Theorem~\ref{thm:exp_convg_appendix}. In what follows, we will use the notation \[\tilde\theta:=(\theta, \phi)\]s
to denote the concatenation of the diffusion prior and decoder parameters as in Appendix~\ref{app:grad_deriv}.

First, we recall the additional assumption.
\begin{trivlist}
\item[\hskip\labelsep \textbf{Assumption}~\ref{assump:lipschitz_grad} (Lipschitz gradient).]\itshape
The log-likelihood $\tilde\ell^m(\tilde\theta, z):= \log p_{\tilde\theta}(x^m, z)$ is differentiable and its gradient $\nabla \tilde\ell^m := (\nabla_{\tilde\theta}\tilde\ell^m, \nabla_{z_0}\tilde\ell^m)$ is $L$-Lipschitz for some $L > 0$, that is, for all $(\tilde\theta, z), (\tilde\theta', z') \in (\Theta \times \Phi) \times \sZ$:
\begin{align*}
    \|\nabla \tilde\ell^m(\tilde\theta, z) - \nabla \tilde\ell^m(\tilde\theta', z')\|\leq L\|(\tilde\theta, z) - (\tilde\theta', z')\|.
\end{align*}
\end{trivlist}

We now consider the continuous-time system of SDEs that correspond to the gradient flow:
\begin{align}
    \dd \tilde{\theta}_t &= \nabla_{\tilde\theta} \int\frac{1}{M N} \sum_{m=1}^M \sum_{n=1}^N \tilde{\ell}^m(\tilde\theta_t, z) q^{m,n}(\dd z) \, \dd t\label{eq:app-cont-theta-ode} \\
    \dd Z_t^{m,n} &= \nabla_{z} \tilde{\ell}^m(\tilde\theta_t, Z_t^{m,n}) \, \dd t + \sqrt{2} \, \dd W_t^{m,n}, \quad (m,n)\in[M]\times [N]\label{eq:app-cont-z-sde}
\end{align}
where $q_t^{m,n}:=\mathrm{Law}(Z_t^{m,n})$ is the law of the particles at time $t$. Since the $Z_t^{m,n}$ are i.i.d., we note that the first equation is equivalent to $\dd \tilde\theta = \nabla_{\tilde\theta} \int {M}^{-1}\sum_{m=1}^M \tilde{\ell}^m(\tilde\theta_t, z) q_t^m(\dd z) \, \dd t$, where $q_t^m$ is the law of $Z_t^{m,n}$ for any $n$.
\begin{lemma}\label{lem:cont-time-convg}
    Under Assumptions~\Cref{assump:model_reg}, \Cref{assump:strong_log_concave}, and \Cref{assump:lipschitz_grad}, the system of SDEs in~\eqref{eq:app-cont-theta-ode}-\eqref{eq:app-cont-z-sde} admits a unique strong solution $(\tilde\theta_t, Z_t^{1:M,1:N})_{t \geq 0}$ and the solution satisfies
    \begin{equation*}
        \mathsf{d}_{\mathcal{M}^{1:M}}((\tilde\theta_t, Q_t^{1:M,N}), (\tilde\theta_\star, Q_\star^{1:M,N})) \leq C e^{-\lambda t}, \quad \forall t>0,
    \end{equation*}
    where $C>0$ is a constant independent of $N$ and $t$, and we defined the empirical distributions to $q_t^m$ and $q_\star^m$ as $Q_t^{m,N}:=N^{-1}\sum_{n=1}^N \delta_{Z_t^{m,n}}$ and $Q_\star^{m,N}:=N^{-1}\sum_{n=1}^N \delta_{Z_\star^{m,n}}$ respectively, where $Z_\star^{m,n}$ are i.i.d. samples from $\tilde\pi_{\tilde\theta_\star}$ (cf. Remark~\ref{remark:star_notation}).
\end{lemma}
\begin{proof}
    The existence and uniqueness of a strong solution follows from Proposition 8 in  \citet{caprioerrorboundsparticle2024} by replacing the system of SDEs therein with ours. To prove the inequality, we note that by a coupling argument for each pair $(Z_t^{m,n}, Z_\star^{m,n})$, we have:
    \begin{align*}
        \mathsf{d}_{\mathcal{M}^{1:M}}((\tilde\theta_t, Q_t^{1:M,N}), (\tilde\theta_\star, Q_\star^{1:M,N}))^2 &\leq \|\tilde\theta_t-\tilde\theta_\star\|^2 + \frac{1}{MN} \sum_{m=1}^{M}\sum_{n=1}^{N} \mathbb{E}[\|Z_t^{m,n}-Z_\star^{m,n}\|^2] \\
        &=\|\tilde\theta_t-\tilde\theta_\star\|^2 + \frac1M \sum_{m=1}^{M}\mathbb{E}[\|Z_t^{m,1}-Z_\star^{m,1}\|^2] \\
        &= \mathsf{d}_{\mathcal{M}^{1:M}}((\tilde\theta_t, q_t^{1:M}), (\tilde\theta_\star, q_\star^{1:M}))^2 \leq \frac{2e^{-2\lambda t}}{\lambda} [\tilde{F}(\tilde\theta_0, q_0) - \tilde{F}_\star],
    \end{align*}
    where we used~\Cref{thm:exp_convg_appendix} in the last inequality.
\end{proof}

\begin{lemma}[Spatial Discretization Error]\label{lem:spatial_discretization}
    Suppose Assumptions~\Cref{assump:strong_log_concave} and~\Cref{assump:lipschitz_grad} hold. Then the following system of SDEs:
    \begin{align}
        \dd \tilde\theta_t^N &= \frac{1}{MN} \sum_{m=1}^{M} \sum_{n=1}^N \nabla_{\tilde\theta} \tilde{\ell}^m(\tilde\theta_t^N, \bar{Z}_t^{m,n}) \, \dd t \label{eq_app:spatial1} \\
        \dd \bar{Z}_t^{m,n} &= \nabla_{z} \tilde{\ell}^m(\tilde\theta_t^N, \bar{Z}_t^{m,n}) \, \dd t + \sqrt{2} \, \dd W_t^{m,n}, \quad (m,n)\in[M]\times [N], \label{eq_app:spatial3}
    \end{align}
    has a strong solution $(\tilde\theta_t^N, \bar{Z}_t^{1:M, 1:N})_{t \geq 0}$. Furthermore, there exists a constant $C(N)>0$ of order $\mathcal{O}(N^{-1/2})$ independent of $t$ and $M$, such that:
    \begin{equation*}
        \mathsf{d}_{\mathcal{M}^{1:M}}((\tilde\theta_t^N, \bar{Q}_t^{1:M, N}), (\tilde\theta_t, Q_t^{1:M,N})) \leq C(N), \quad \forall t>0,
    \end{equation*}
    where $\bar{Q}_t^{m, N}:=N^{-1}\sum_{n=1}^N \delta_{\bar{Z}_t^{m,n}}$ for each $m\in [M]$ and $Q_t^{m,N}$ is defined as in Lemma~\ref{lem:cont-time-convg}.
\end{lemma}
\begin{proof}
    The proof is a modification of~\citet[Lemma 13]{caprioerrorboundsparticle2024} by replacing the SDEs therein with \eqref{eq_app:spatial1}-\eqref{eq_app:spatial3}. More specifically, we define the quantity $\xi^N_t$ as:
    \begin{equation}
        \xi_t^N := \|\tilde\theta_t^N - \tilde\theta_t\|^2 + \frac1M \sum_{m=1}^M \frac1N \sum_{n=1}^N \|Z_t^{m,n} - \bar{Z}_t^{m,n}\|^2,
    \end{equation}
    and we will show that $\mathbb{E}[\xi_t^N] \leq C(N)$ for some constant $C(N)$ which upper bounds the spatial discretization error. Using Itô's formula:
    \begin{align}
        \dd \|\tilde\theta_t^N - \tilde\theta_t\|^2 &= 2 \left\langle \tilde\theta_t^N - \tilde\theta_t, \frac{1}{MN} \sum_{m=1}^{M} \sum_{n=1}^N \nabla_{\tilde\theta} \tilde{\ell}^m(\tilde\theta_t^N, \bar{Z}_t^{m,n}) - \nabla_{\tilde\theta} \int \frac1M \sum_{m=1}^M \tilde{\ell}^m(\tilde\theta_t, z) q_t^m(\dd z) \right\rangle \dd t, \label{eq_app:spatial_error_theta}\\
        \dd \|\bar{Z}_t^{m,n} - {Z}_t^{m,n}\|^2 &= 2 \left\langle  
            \bar{Z}_t^{m,n} - {Z}_t^{m,n}, \nabla_z \tilde{\ell}^m(\tilde\theta_t^N, \bar{Z}_t^{m,n}) - \nabla_z \tilde{\ell}^m(\tilde\theta_t, Z_t^{m,n})
        \right\rangle \dd t.\label{eq_app:spatial_error_z}
    \end{align}
    Adding and subtracting $(MN)^{-1}\sum_m\sum_n\nabla_{\tilde\theta} \tilde{\ell}^m(\tilde\theta_t, Z_t^{m,n})$ from~\eqref{eq_app:spatial_error_theta} and summing with~\eqref{eq_app:spatial_error_z} scaled by $(MN)^{-1}$ yields:
    \begin{align*}
        \dd \xi_t^N &= \frac{2}{MN} \sum_{m=1}^M \sum_{n=1}^N 2\bigl[\left\langle \tilde\theta_t^N - \tilde\theta_t, \nabla_{\tilde\theta} \tilde{\ell}^m(\tilde\theta_t^N, \bar{Z}_t^{m,n}) - \nabla_{\tilde\theta} \tilde{\ell}^m(\tilde\theta_t, Z_t^{m,n}) 
        \right\rangle \dd t \\
        &+\left\langle \bar{Z}_t^{m,n} - {Z}_t^{m,n}, \nabla_z \tilde{\ell}^m(\tilde\theta_t^N, \bar{Z}_t^{m,n}) - \nabla_z \tilde{\ell}^m(\tilde\theta_t, Z_t^{m,n})\right\rangle
        \bigr] \dd t + 2 G_t^N \dd t,
    \end{align*}
    where we defined the term $G_t^N$ as:
    \begin{align*}
        G_t^N &:= \frac1M \sum_{m=1}^M G_t^{m,N}, \quad \text{where}\  G_t^{m,N} := \left\langle \tilde\theta_t^N - \tilde\theta_t, \frac1N \sum_{n=1}^{N} \nabla_{\tilde\theta} \tilde{\ell}^m(\tilde\theta_t, Z_t^{m,n}) - \nabla_{\tilde\theta} \int \tilde{\ell}^m(\tilde\theta_t, z) q_t^m(\dd z) \right\rangle.
    \end{align*}
    Using the strong log-concavity~\Cref{assump:strong_log_concave}, 
    and taking the expectation yields
    \begin{align*}
        \dd \mathbb{E}[\xi_t^N] &\leq -2\lambda \mathbb{E}[\xi_t^N] \dd t + 2 \mathbb{E}[G_t^N] \dd t.
    \end{align*}
    Following the same argument as in~\citet[Lemma 13]{caprioerrorboundsparticle2024}, together with the analogous second-moment bound from~\citet[Proposition 26]{caprioerrorboundsparticle2024} and the $L$-Lipschitz gradient assumption~\Cref{assump:lipschitz_grad}, we can upper bound $|\mathbb{E}[G_t^{m,N}]|$ for each $m\in [M]$:
    \begin{equation}
        \left|\mathbb{E}[G_t^{m,N}]\right|\leq L \sqrt{\frac{2\mathbb{E}\left[\|\tilde\theta_t^N - \tilde\theta_t\|^2\right]}{N} \left(\|\tilde\theta_0\|^2 + \mathbb{E}[\|Z_0\|^2] + \frac{d_z}{\lambda}\right)}\leq L \sqrt{\frac{2\mathbb{E}[\xi_t^N]}{N} \left(\|\tilde\theta_0\|^2 + \mathbb{E}[\|Z_0\|^2] + \frac{d_z}{\lambda}\right)}.
    \end{equation}
    Now we have the following differential inequality for $\mathbb{E}[\xi_t^N]$:
    \begin{equation}
        \frac{\dd}{\dd t} \mathbb{E}[\xi_t^N]^{1/2} \leq -\lambda \mathbb{E}[\xi_t^N]^{1/2} + L \sqrt{\frac{2}{N} \left(\|\tilde\theta_0\|^2 + \mathbb{E}[\|Z_0\|^2] + \frac{d_z}{\lambda}\right)}.
    \end{equation}
    Applying Gr\"onwall's lemma yields the result:
    \begin{equation}
        \mathbb{E}[\xi_t^N]^{1/2} \leq e^{-\lambda t} \mathbb{E}[\xi_0^N]^{1/2} + \frac{(1-e^{-\lambda t})L}{\lambda} \sqrt{\frac{2}{N} \left(\|\tilde\theta_0\|^2 + \mathbb{E}[\|Z_0\|^2] + \frac{d_z}{\lambda}\right)},
    \end{equation}
    where the first term is zero due to construction.
\end{proof}

\begin{lemma}[Temporal Discretization Error]\label{lem:temporal_discretization}
    Suppose Assumptions~\Cref{assump:strong_log_concave} and~\Cref{assump:lipschitz_grad} hold. Then for the following Euler-Maruyama scheme for the SDEs in \eqref{eq_app:spatial1}-\eqref{eq_app:spatial3}:
    \begin{align}
        \tilde\theta_{k+1}^{N,h} &= \tilde\theta_k^{N,h} + \frac{h}{MN} \sum_{m=1}^{M}\sum_{n=1}^N \nabla_{\tilde\theta} \tilde{\ell}(\tilde\theta_k^{N,h}, \bar{Z}_k^{m,n,N,h}) \label{eq_app:temp1} \\
        \bar{Z}_{k+1}^{m,n,N,h} &= \bar{Z}_k^{m,n,N,h} + h \nabla_{z} \tilde{\ell}(\tilde\theta_k^{N,h}, \bar{Z}_k^{m,n,N,h}) + \sqrt{2h} W_k^n, \quad (m,n) \in [M] \times [N], \label{eq_app:temp3}
    \end{align}
    we have for $h \leq {1}/({\lambda+L})$ with $\lambda$ being the strong concavity constant in Assumption~\Cref{assump:strong_log_concave} and $L$ being the Lipschitz constant in \Cref{assump:lipschitz_grad}, there exists a constant $C(h)>0$ independent of $k$ and $N$ of order $\mathcal{O}(h^{1/2})$,
    such that:
    \begin{equation*}
        \mathsf{d}_{\mathcal{M}^{1:M}}((\tilde\theta_k^{N,h}, \bar{Q}_k^{1:M,N,h}), (\tilde\theta_{kh}^N, \bar{Q}_{kh}^{1:M,N})) \leq \sqrt{h}C(h), \quad \forall k \in \mathbb{N},
    \end{equation*}
\end{lemma}
\begin{proof}
    The proof is a direct adaptation of~\citet[Lemma 14]{caprioerrorboundsparticle2024} by replacing the discretization therein with~\cref{eq_app:temp1,eq_app:temp3} and using the same argument for the $M\times N$ equations for the particles and replacing the loss with $M^{-1}\sum_{m=1}^M \tilde{\ell}^m$ for the parameters.
\end{proof}

\begin{theorem}\label{thrm_app:error-bound}
Suppose that the premise of Theorem~\ref{thm:exp_convg_informal} and \Cref{assump:lipschitz_grad} hold, and that $\mathcal{R}$ has Lipschitz gradients. For all sufficiently small $h>0$, there exists an $\mathcal{O}(h^{1/2}+N^{-1/2})$ constant $C_{h,N}$ independent of $T$, a $\rho \in (0, 1)$, and a $C>0$ independent of $(h,N,T)$, such that
$$\mathbb{E}\left[\vert\vert\tilde\theta_T-\tilde\theta_\star \vert\vert^2\right]^{1/2} \leq C_{h,N}+C\rho^T\quad\forall T\in \mathbb{N},$$
where $\tilde\theta_\star$ denotes $\tilde{\ell}$'s unique maximizer.
\end{theorem}
\begin{proof}
    The proof follows from a combination of Lemma~\ref{lem:cont-time-convg}, Lemma~\ref{lem:spatial_discretization}, and Lemma~\ref{lem:temporal_discretization} by an application of the triangle inequality and the definition of the metric $\mathsf{d}_{\mathcal{M}^{1:M}}$.
\end{proof}

\section{Additional Results}
\label{app:additional_results}
\subsection{Interpolating the Latent Space}
As a deep \gls*{LVM}, \gls*{IPLD} is able to learn a smooth latent space, enabling semantically meaningful interpolations. In \Cref{fig:train_intp}, we take two indices $m_1, m_2 \in [M]$ from the training set indices and extract the particles $z_{0}^{m_1, 0}, z_{0}^{m_2,0}$ from the particle cloud. We compute the linear interpolation via $\texttt{lerp}(x,y,s)=s x + (1-s) y$ between those particles for $s\in[0,1]$ and decode back into the pixel space with the decoder $g_\phi$.
\begin{figure}[htp]
    \centering
    \includegraphics[width=0.7\linewidth]{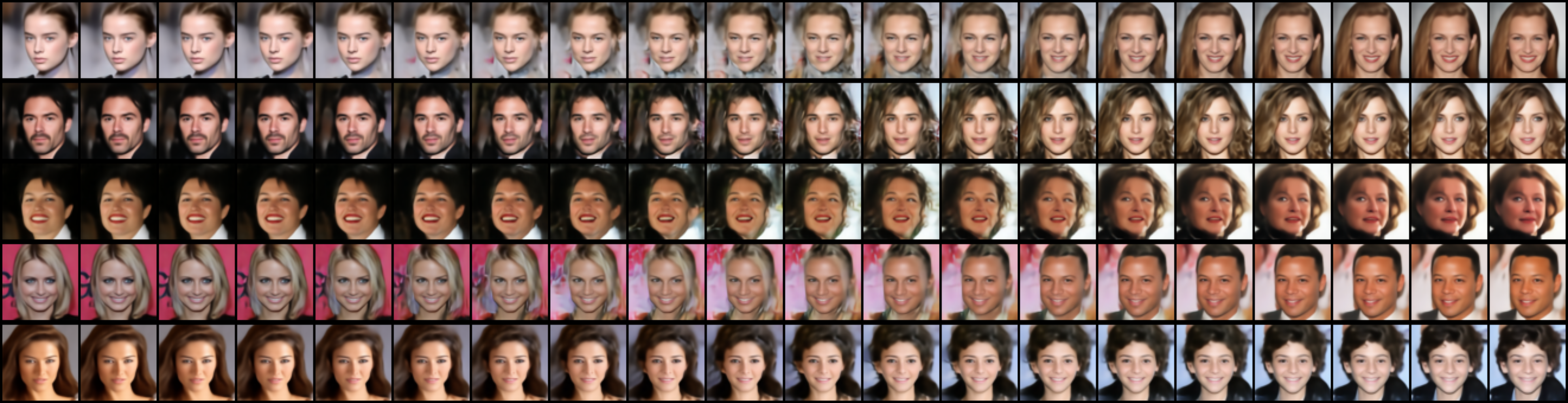}
    \caption{Linear interpolation between the particles learned on the CelebA64 training set.}
    \label{fig:train_intp}
\end{figure}
\subsection{Evolution of the Particle Cloud}
We also visualize how the particle cloud $z_{0}^{1:M, 1:N}$ evolves through the gradient flow. To achieve this, we save the particle cloud at training epochs $\{0,5,10,15,20,25,30,50,75,90\}$ and pass them through the decoder $g_\phi$ (\Cref{fig:evo_cifar,fig:evo_celeba}). We note that there is an intriguing perceptual similarity between the particle evolution and the posterior mean $\mathbb{E}[ x_0 | x_t]$ of a diffusion model. For comparison, we show the evolution of the posterior mean predicted in \Cref{fig:evo_ddim_cifar,fig:evo_ddim_celeba}. The posterior mean is computed by a pretrained score network taken from \citet{song2022denoisingdiffusionimplicitmodels} in the pixel space via Tweedie's formula \citep{efron2011tweedie}: $\mathbb{E}[ x_0| x_t] \approx  (\sqrt{\bar{\alpha}_t})^{-1} [  x_t + \sigma_t^2 \nabla_{ x_t} \vs_\theta( x_t, t)]$, where $\sigma_t^2$ is the variance of the reverse process and $\vs_\theta( x_t, t)$ is the score estimating network.
We remark that there have been several recent works attempting to delineate the connection between diffusion and gradient flow \citep{yiMonoFlowRethinkingDivergence2023,huangGANsGradientFlows2023,franceschiUnifyingGANsScorebased2023}. Our method can be thought as an attempt of learning the gradient flow via a diffusion model.
\begin{figure}[htbp] 
    \centering
    \begin{subfigure}{0.35\linewidth}
         \centering
         \includegraphics[width=\linewidth]{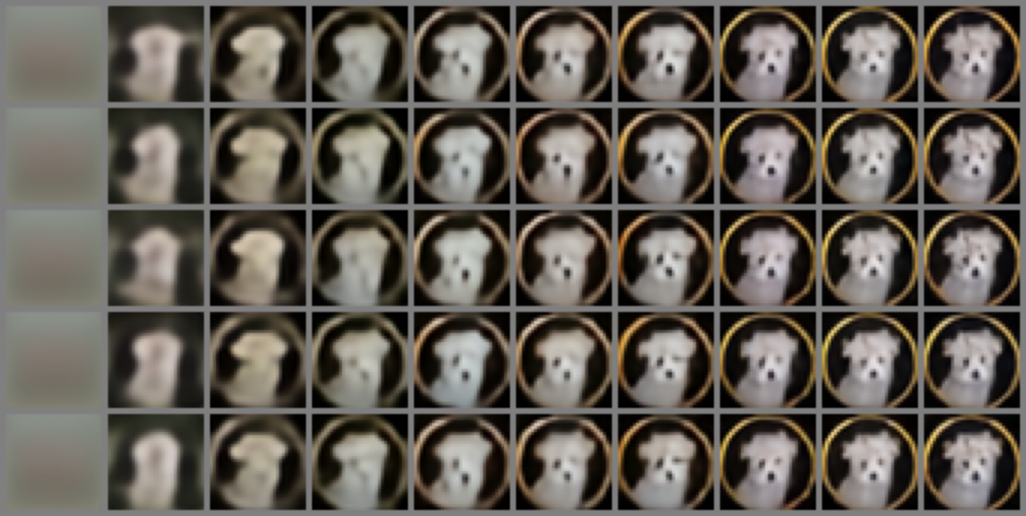}
         \caption{CIFAR-10}
         \label{fig:evo_cifar}
    \end{subfigure}
    \hspace{0.2cm}
    \begin{subfigure}{0.35\linewidth}
         \centering
         \includegraphics[width=\linewidth]{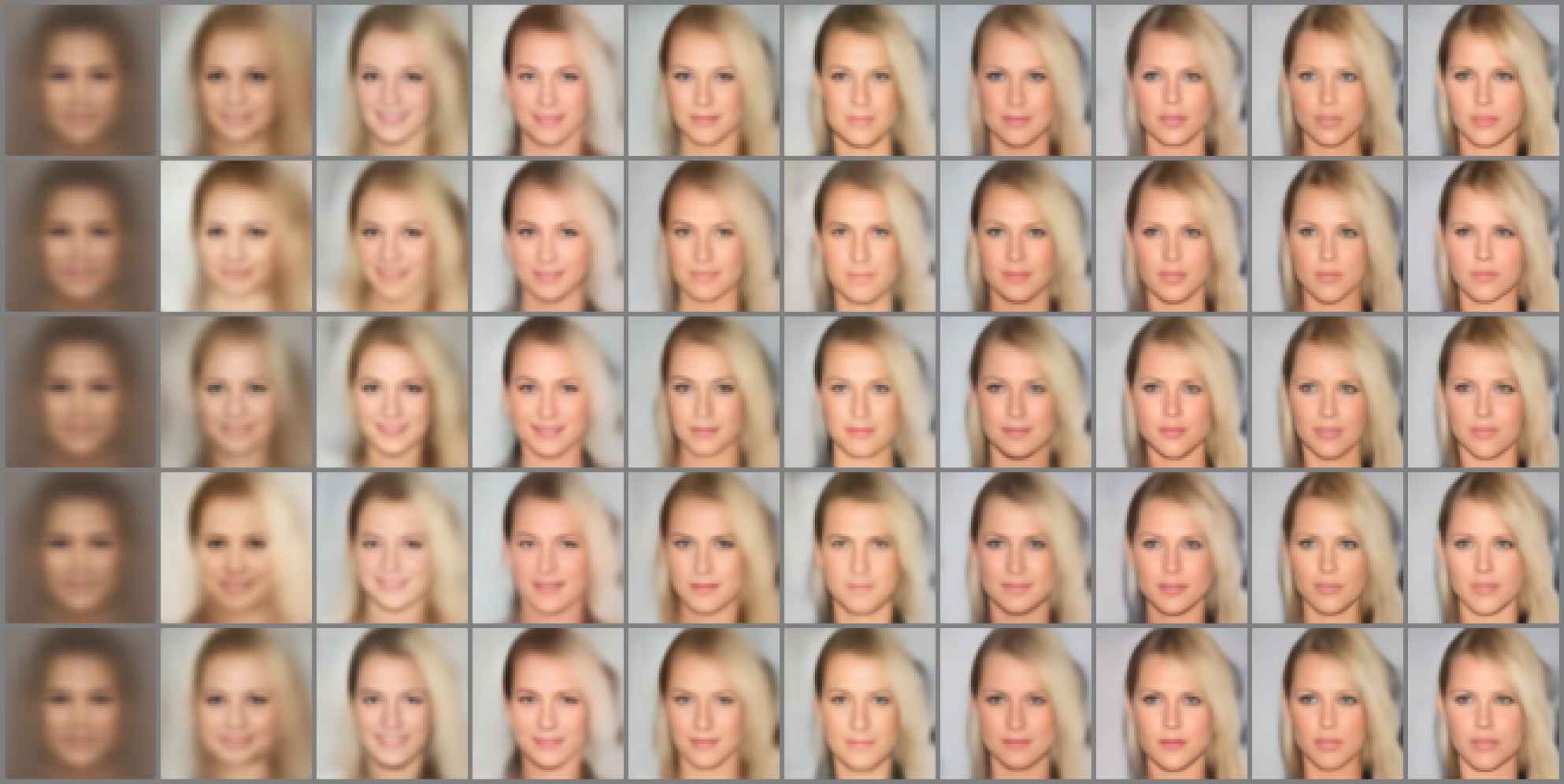}
         \caption{CelebA64}
         \label{fig:evo_celeba}
    \end{subfigure}
    \caption{Evolution of the particle cloud of \gls*{IPLD} trained on the CIFAR-10 and CelebA64 dataset with $5$ particles. Zoom in to view the details better.}
\end{figure}
\begin{figure}[htbp]
    \centering
    \begin{subfigure}{0.35\linewidth}
         \centering
         \includegraphics[width=\linewidth]{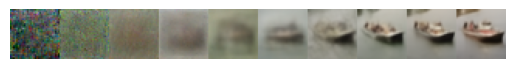}
         \caption{CIFAR-10}
         \label{fig:evo_ddim_cifar}
    \end{subfigure}
    \hspace{0.2cm}
    \begin{subfigure}{0.35\linewidth}
         \centering
         \includegraphics[width=\linewidth]{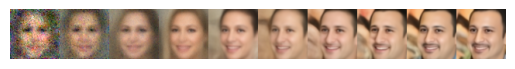}
         \caption{CelebA64}
         \label{fig:evo_ddim_celeba}
    \end{subfigure}
    \caption{Evolution of the predicted $\mathbb{E}[x_0| x_t]$ of \gls*{DDPM} trained on the CIFAR-10 and CelebA64 dataset.}
\end{figure}

\subsection{Ablation on Diffusion Weighting}
As reported in \citet{hodenoisingdiffusionprobabilistic2020,kingmaunderstandingdiffusionobjectives2023}, using the simplified diffusion objective can yield samples with better visual quality. To verify this effect within our framework, we compare two variants of our model: \gls*{IPLD} likelihood using the diffusion loss in \eqref{app_eq:diffusion_likelihood} and \gls*{IPLD} usual using the loss in \eqref{eq_app:simplified_diffusion}. We train all models with the same training hyperparameters specified in \Cref{app:image_exp_details}.
\begin{table}[htp]
    \centering
    \begin{tabular}{lcc}
\hline
& SVHN           & CIFAR-10        \\ \hline
\gls*{IPLD} usual & 17.55     &  51.60     \\
\gls*{IPLD} likelihood & 18.62 & 53.22  \\
\hline
\end{tabular}
    \vspace{0.2cm}
    \caption{Ablation study on diffusion weighting.  The final \gls*{FID} scores are reported.}
    \label{tab:app_ablate_likelihood}
\end{table}
The results in Table \ref{tab:app_ablate_likelihood} show that the simplified objective yields a consistently better FID score, which aligns with prior findings.

\subsection{Ablation on Two Stage Training}
\label{app:2stage_ablate}
\citet{leng2025repaeunlockingvaeendtoend} hypothesized that backpropogating the diffusion loss to the \gls*{VAE} directly make the latent space simpler. They suggested this could inadvertently "hack" the denoising objective, leading the diffusion model to simply predict noise from the \gls*{VAE}'s Gaussian approximate posterior.

To investigate this, we conduct an ablation study comparing end-to-end training and two-stage training. We train both models with $1$ particle using the same training hyperparameters; for two-stage training we detach the gradients on the particles before passing them to the diffusion model, hence preventing backpropogating the diffusion loss. We show in \Cref{tab:app_ablate_2stage} that end-to-end training in \gls*{IPLD} does not suffer from the same issue observed in \citet{leng2025repaeunlockingvaeendtoend}. This aligns with the results in \citet{vahdatscorebasedgenerativemodeling2021}, who also found benefits to end-to-end training. It is important to note, however, that both our experiments and those of \citet{vahdatscorebasedgenerativemodeling2021} were conducted on relatively lower-resolution datasets. Therefore, the conclusions drawn here may not directly extrapolate to the larger-scale experimental settings used in \citet{leng2025repaeunlockingvaeendtoend}.
\begin{table}[htp]
    \centering
    \begin{tabular}{lccc}
\hline
& SVHN           & CIFAR-10      & CelebA64    \\ \hline
\gls*{IPLD} usual & 17.55     &  51.60    & 22.86   \\
\gls*{IPLD} detached & 19.07 & 52.96 &  23.19 \\
\hline
\end{tabular}
    \vspace{0.2cm}
    \caption{Ablation study on two-stage training.  The final \gls*{FID} scores are reported.}
    \label{tab:app_ablate_2stage}
\end{table}

\subsection{Runtime and Memory Comparisons}
\label{app:runtime-compare}
In \Cref{tab:single_gpu_performance}, we benchmark the peak memory usage and walltime for a single forward-backward pass with a batch size of 64 on a single A6000 GPU (48GB) for both \gls*{DiffusionVAE} and \gls*{IPLD}:
\begin{table}[h!]
\centering
\caption{Performance comparison on a single A6000 GPU. Lower values are better.}
\label{tab:single_gpu_performance}
\begin{tabular}{
  l
  S[table-format=2.2]
  S[table-format=1.2]
}
\toprule
\textbf{Method} & {\textbf{Peak Memory (GB) $\downarrow$}} & {\textbf{Walltime (s) $\downarrow$}} \\
\midrule
\gls*{DiffusionVAE} 1-particle  & 5.19  & 0.15 \\
\gls*{DiffusionVAE} 5-particle  & 17.51 & 0.54 \\
\gls*{DiffusionVAE} 10-particle & 33.89 & 1.01 \\
\midrule
\gls*{IPLD} 1-particle          & 4.19  & 0.12 \\
\gls*{IPLD} 5-particle          & 17.26 & 0.49 \\
\gls*{IPLD} 10-particle         & 33.62 & 0.98 \\
\bottomrule
\end{tabular}
\end{table}
Using \texttt{torchrec} \citep{torchrec} and custom sharding strategies enabled by it, we implement a distributed version of \gls*{IPLD} by allocating each accelerator a different subset of the particles (cf. \Cref{sec:pratical-alg}). The results in \Cref{tab:distributed_gpu_performance} confirms the scalability of our algorithm.
\begin{table}[h!]
\centering
\caption{Performance of our custom distributed \gls*{IPLD} implementation on two A6000 GPUs. Lower values are better.}
\label{tab:distributed_gpu_performance}
\begin{tabular}{
  l
  S[table-format=2.2]
  S[table-format=1.2]
}
\toprule
\textbf{Method} & {\textbf{Peak Memory (GB) $\downarrow$}} & {\textbf{Walltime (s) $\downarrow$}} \\
\midrule
\gls*{IPLD} 1-particle  & 2.81  & 0.09 \\
\gls*{IPLD} 5-particle  & 9.62  & 0.29 \\
\gls*{IPLD} 10-particle & 17.95 & 0.56 \\
\bottomrule
\end{tabular}
\end{table}

\subsection{Additional Uncurated Samples}
\label{app:additional_samples}
We present additional uncurated samples of \gls*{IPLD} trained with 10 particles on CIFAR-10, SVHN, and CelebA64 dataset produced by the DDIM sampler \citep{song2022denoisingdiffusionimplicitmodels} with 100 NFEs. We remark that only the CelebA64 samples in the main text have been curated for better visualization.

\begin{figure}[htbp]
    \centering
    \includegraphics[width=0.5\linewidth]{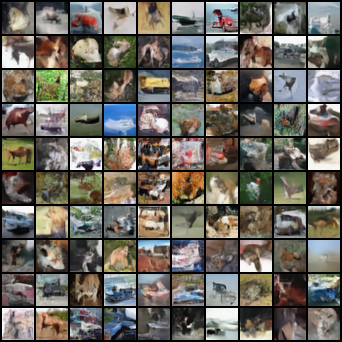}
    \caption{Uncurated samples on CIFAR-10.}
\end{figure}

\begin{figure}[htbp]
    \centering
    \includegraphics[width=0.5\linewidth]{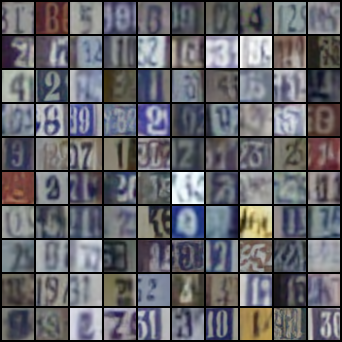}
    \caption{Uncurated samples on SVHN.}
\end{figure}
\begin{figure}[htbp]
    \centering
    \includegraphics[width=0.5\linewidth]{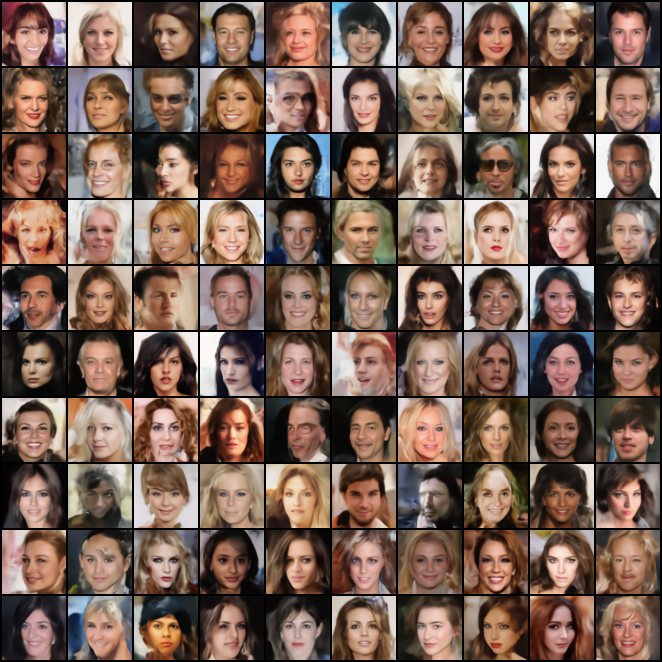}
    \caption{Uncurated samples on CelebA64.}
\end{figure}

\end{document}